\documentclass[11pt]{article}
\usepackage{amsthm}
\usepackage{amsfonts}
\usepackage{xcolor}
\usepackage{mathtools}
\usepackage{enumitem}
\usepackage{hyperref}
\usepackage{geometry}
\usepackage{natbib}

\usepackage{thmtools}
\usepackage{thm-restate}

\usepackage{algorithm}
\usepackage{algorithmic}

\newcommand{\authnote}[2]{}

\newcommand{\Enote}[1]{\authnote{Eliad}{#1}}

\def\cA{{\cal A}}

\def\cD{{\cal D}}

\def\cH{{\cal H}}

\def\cP{{\cal P}}

\def\cT{{\cal T}}

\def\cX{{\cal X}}
\def\cY{{\cal Y}}

\def\bU{\mathbf{U}}
\def\bV{\mathbf{V}}
\def\bS{\mathbf{S}}
\def\bP{\mathbf{P}}
\def\bt{\mathbf{t}}

\def\bh{\mathbf{h}}

\def\bbF{{\mathbb F}}

\newcommand{\Tableofcontents}{
\thispagestyle{empty}
\pagenumbering{gobble}
\clearpage
\tableofcontents
\thispagestyle{empty}
\clearpage
\pagenumbering{arabic}
}

\newcommand{\set}[1]{\left\{#1\right\}}
\newcommand{\zo}{\set{0,1}}

\newcommand{\remove}[1]{}

\newcommand{\RepLinearSpan}{\texttt{RepLinearSpan}}

\newcommand{\eps}{\varepsilon}

\newcommand{\la}{\leftarrow}

\newcommand{\ip}[1]{\iprod{#1}}
\newcommand{\iprod}[1]{\langle #1 \rangle}

\newtheorem{claim}{Claim}
\newtheorem{theorem}{Theorem}
\newtheorem{lemma}{Lemma}
\newtheorem{proposition}{Proposition}

\newtheorem{remark}{Remark}

\newtheorem{corollary}{Corollary}
\newtheorem{definition}{Definition}[section]

\title{Computationally Efficient Replicable Learning of Parities and Applications}

\author{Moshe Noivirt\thanks{Department of Computer Science, Johns Hopkins University. E-mails: {\tt noivirt.moshe@gmail.com}, {\tt jess@jhu.edu}.}
\and
Jessica Sorrell$^\ast$
\and
Eliad Tsfadia\thanks{Department of Computer Science, Bar-Ilan University. E-mail: {\tt eliad.tsfadia@biu.ac.il}.}
}

\begin{document}
\maketitle

\begin{abstract}%

    We study the computational relationship between replicability (Impagliazzo et al. [STOC `22], Ghazi et al. [NeurIPS `21]) and other stability notions.  
    Specifically, we focus on replicable PAC learning and its connections to differential privacy (Dwork et al. [TCC 2006]) and to the statistical query (SQ) model (Kearns [JACM `98]). Statistically, it was known that differentially private learning and replicable learning are equivalent and strictly more powerful than SQ-learning. Yet, computationally, all previously known efficient (i.e., polynomial-time) replicable learning algorithms were confined to SQ-learnable tasks or restricted distributions, in contrast to differentially private learning. 

    Our main contribution is the first computationally efficient replicable algorithm for realizable learning of parities over arbitrary distributions, a task that is known to be hard in the SQ-model, but possible under differential privacy.
    This result provides the first evidence that efficient replicable learning over general distributions strictly extends efficient SQ-learning, and is closer in power to efficient differentially private learning, despite computational separations between replicability and privacy.
    Additionally, we leverage our parity learner to prove that, assuming $RP \neq NP$, converting replicability to pure differential privacy requires a strict loss in sample complexity.
    Our main building block is a new, efficient, and replicable algorithm that, given a set of vectors, outputs a subspace of their linear span that covers most of them.
    
\end{abstract}

\Tableofcontents

\section{Introduction}

Stable algorithms have numerous uses in learning theory, giving provable guarantees of generalization~\citep{bousquet2002stability,dwork2015preserving,cummings2016adaptive}, privacy~\citep{dwork2006calibrating}, robustness~\citep{asi2023robustness,hopkins2023robustness}, and replicability~\citep{ghazi2021user,impagliazzo2022reproducibility}. While distinct stability notions have been introduced to capture each of these desiderata, many of these notions have been shown to be statistically equivalent, making stability a unifying principle in the study of trustworthy machine learning~\citep{moran2023bayesian,asi2023robustness,hopkins2023robustness,bassily2016algorithmic}. These equivalences are not all computational, however, and efficient reductions between stability notions is an active area of research.

In this work, we make progress on understanding the computational relationships between privacy, replicability, and the statistical query (SQ) model. Previously, a statistical separation between replicability and the SQ model was demonstrated via the heavy hitters problem~\citep{impagliazzo2022reproducibility}. They gave a replicable algorithm with sample complexity independent of the domain size of the problem instance, and a lower bound on the number of queries required by any SQ algorithm that depends logarithmically on the domain size. Since the dependence is only logarithmic, this result falls short of demonstrating a \emph{SQ-hard} problem admitting a computationally efficient replicable algorithm, and therefore a full computational separation between replicability and the SQ model. Some progress on computationally separating replicability and the SQ model was made in the same work~\citep{impagliazzo2022reproducibility}, by considering replicable algorithms for parities, a problem known to be SQ-hard~\citep{Kearns98}. They observed that parities could be efficiently replicably learned over the uniform distribution, as there is a unique solution in this case. Notably, the implied algorithm (Gaussian elimination) is not replicable for other distributions, and therefore does not satisfy the standard definition of replicability which is universally quantified over distributions. More significant progress followed in~\cite{kalavasis2024computational}, where they gave a replicable algorithm for parities that runs in time polynomial in all relevant parameters of the instance, but exponential in the decision tree complexity of the target distribution. These works left open the question of computationally efficient replicable algorithms for parities over arbitrary distributions. 

Turning to the relationship between replicability and privacy, these stability notions are known to be statistically equivalent~\citep{ghazi2021user,bun2023stability,kalavasis2023statistical}, i.e., the existence of a sample-efficient algorithm for a statistical task satisfying one notion implies the existence of a sample-efficient algorithm satisfying the other. The computational picture, however, is not as clear. Efficient replicable algorithms imply efficient private ones, but ~\cite{bun2023stability} show that the converse cannot hold if rerandomizable public-key cryptography (e.g., the Goldwasser-Micali cryptosystem) exists. Furthermore, they show that cryptographic assumptions are necessary for their separation, proving that if one-way functions do not exist, a private algorithm can always be efficiently transformed into a replicable one via correlated sampling. Whether the existence of one-way functions is sufficient for the separation, or if public-key cryptography is required, remains an interesting open problem. 
Whether there exists a useful characterization of learning tasks for which efficient private learning algorithms exist, but replicable algorithms imply cryptographic adversaries also remains open. We know, for instance, that any learning task admitting an efficient statistical query (SQ) algorithm admits an efficiently replicable one, by the replicable statistical query procedure of~\cite{impagliazzo2022reproducibility}. And, to the best of our knowledge, all PAC learning tasks that are known to have efficient replicable algorithms also admit efficient SQ algorithms. One may then wonder if SQ learnable classes are exactly the classes which are computationally efficiently learnable under both privacy and replicability. This further motivates the question of whether efficient replicable algorithms exist for parities, since efficient private algorithms are known for this problem even under the strong notion of pure differential privacy (where the $\delta$ parameter of $(\varepsilon, \delta)$-DP is taken to be 0)~\citep{kasiviswanathan2011can}. 

In this work, we give the first efficient algorithm for replicable realizable learning of parities, and therefore the first efficient algorithm for replicable PAC learning of a class known to be SQ hard~\citep{Kearns98}. We restrict ourselves to the realizable setting out of necessity: it is known that agnostic learning of parities is NP-hard~\citep{haastad2001some}, and learning parities with constant noise rate smaller than 1/2 is conjectured to be cryptographically hard~\citep{blum2003noise}. Our algorithm makes use of a replicable subspace identification subroutine, which finds a subspace of the span of the support of a target distribution capturing at least a $1-\varepsilon$ fraction of the distribution.
 
Our algorithm has additional implications for the relationship between replicability and pure differential privacy. It is already known that there is no generic transformation from an efficient replicable algorithm to an efficient pure DP algorithm, since the latter has sample complexity characterized by Representation dimension~\citep{beimel2019characterizing}, and there exist efficiently replicably learnable classes with infinite representation dimension (e.g. point funtions). However, if we restrict to classes with representation dimension polynomial in the Littlestone dimension -- that is, classes for which efficient transformations from replicable algorithms to pure private algorithms may plausibly exist -- our replicable parity learner shows that any such transformation cannot preserve the sample complexity dependence on the failure probability. This follows from the optimal additive dependence of our algorithm on its failure probability ($O(\log(1/\delta)/\varepsilon)$), and known lower bounds on the sample complexity of pure private learning of parities, following from connections to robustness~\cite{georgiev2022privacy}.

\subsection{Our Results}

Our main result is a computationally efficient and replicable algorithm for learning parity functions. Namely, it has polynomial running time, and it satisfies the following two properties:

\begin{enumerate}
    \item \textbf{Realizable $(\eps,\delta)$-PAC Learning:} Given i.i.d.\ samples $x_1,\ldots,x_m \in \mathbb{F}_2^d$ drawn from an unknown distribution, and labels $y_1,\ldots,y_m \in \bbF_2$ such that $y_i = \ip{x_i,z}$ for some unknown $z \in \mathbb{F}_2^d$, the algorithm, with probability $1-\delta$, outputs $w \in \mathbb{F}_2^d$ that correctly predicts the label of a fresh sample with probability $1-\eps$.

    \item \textbf{$\rho$-Replicability:} When executing the algorithm on two independent input sets using the same internal randomness, the two executions will output the same $w$ with probability at least $1-\rho$.
\end{enumerate}

\begin{theorem}[Replicable Learning of Parities]\label{thm:intro:parities}
    There exists a polynomial-time $\rho$-replicable learning algorithm that (realizably) $(\eps,\delta)$-PAC learns the class of parity functions over $\bbF_2^d$ with sample complexity $poly(d,1/\rho,1/\eps,\log(1/\delta))$.
\end{theorem}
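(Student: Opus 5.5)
The plan is to reduce the problem to \emph{replicable subspace identification}, which is the building block announced in the abstract, and then to finish deterministically. The point is that in the realizable setting, once we fix a subspace $V \subseteq \bbF_2^d$ that is contained in the span of the sample points, the target parity $z$ is determined on $V$ by the labels. Concretely, the restriction $x \mapsto \ip{x,z}$ to $V$ is a linear functional on $V$. It can be recovered from any labeled spanning set of $V$ drawn from the sample. Even though $z$ itself is not unique, this functional on $V$ is unique. We can then output a \emph{canonical} $w$, say the lexicographically smallest solution of the linear system $\ip{b_j,w}=\ip{b_j,z}$, where $b_1,\ldots,b_k$ is a canonical basis of $V$ such as its reduced row echelon form. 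The output is then a deterministic function of $V$ alone. So if two runs agree on $V$, they agree on $w$, and replicability reduces entirely to replicability of the choice of $V$.

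The algorithm proceeds in three steps. First, draw $m$ samples and run a replicable procedure \RepLinearSpan\ on the unlabeled points $x_1,\ldots,x_m$. It outputs a subspace $V$ of their span satisfying $\Pr_{x\sim\cD}[x\in V]\ge 1-\eps$ with probability $1-\delta$, and it is $\rho$-replicable. Second, since $V\subseteq \mathrm{span}\{x_i\}$, run Gaussian elimination to express a canonical basis of $V$ as combinations of the $x_i$. This gives the labels $\ip{b_j,z}$ by linearity, and from them we compute the canonical $w$. Third, correctness follows: every $x\in V$ satisfies $\ip{x,w}=\ip{x,z}$, so the error is at most $\Pr[x\notin V]\le\eps$. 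All steps are polynomial time.

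The main work, and the obstacle, is constructing \RepLinearSpan\ efficiently. A naive span is not replicable, because rare directions appear in one sample and not the other. My approach is a randomized-threshold ``heavy subspace'' procedure. Choose a random threshold $t$ from a grid in $[\eps/2,\eps]$, say, in the spirit of the replicable rounding of \citep{impagliazzo2022reproducibility}. Then greedily build a chain $V_0=\{0\}\subset V_1\subset\cdots$ by taking $V$ to be the largest (or canonical) subspace $W$ of the sample span whose empirical mass exceeds the threshold. The difficulty is that the family of candidate subspaces is exponential, so we cannot estimate all their masses. To handle this, I will use the fact that the dimension is at most $d$: a chain of subspaces has length at most $d$, and in each round we only need to decide whether the mass outside the current $V$ exceeds a threshold. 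If it does, we add directions that are ``heavy'' conditional on lying outside $V$, meaning cosets or subspaces carrying at least a $1/\mathrm{poly}(d)$ fraction of the remaining mass. These can be found by taking the span of many samples outside $V$ and using randomized thresholds to decide replicably which of a polynomial number of candidate subspaces are heavy.

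I expect the crux to be showing two things. First, that the candidate set at each round can be made canonical and polynomial-size, for example the intersection of the spans of independent sample batches, which is distribution-determined with high probability once the batches are large relative to $d/\eps$. Second, that a union bound over the $d$ rounds with per-round replicability parameter $\rho/d$ and threshold slack keeps the sample complexity $\mathrm{poly}(d,1/\rho,1/\eps)\cdot\log(1/\delta)$. For correctness, the $\log(1/\delta)$ dependence comes from standard Chernoff bounds on mass estimates.
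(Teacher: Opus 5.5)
Your outer reduction (replicably choose a subspace, then output a parity determined by it) is the paper's plan too. But the heart of the theorem, an efficient replicable subspace procedure, is not supplied, and your sketch of it does not work. The candidate family you propose, intersections of spans of independent batches, is not distribution-determined for any fixed batch size $b$ and batch count $k$. Let $\cD$ be supported on $\set{0,e_1}$ with $\cD(e_1)=\mu$ chosen so that $(1-(1-\mu)^b)^k=1/2$. Then $e_1$ lies in the intersection with probability exactly $1/2$, so two runs see different candidate sets with probability $1/2$. Randomized thresholds applied afterwards cannot repair a candidate set that itself differs between runs; Proposition~\ref{prop:span_instability} shows the same phenomenon for plain spans at every polynomial sample size simultaneously.

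What is missing is a small, canonical family of candidates whose counts move little under \emph{full} resampling. The paper gets this from the Stable Partition. The spans of its sets form a nested chain of at most $d$ subspaces, and the multiplicities $n_S(V)$ have $\ell_\infty$-sensitivity $1$ and $\ell_1$-sensitivity $2$. Lemma~\ref{lem:phiwhp} then gives $\sup_V|n_{S_1}(V)-n_{S_2}(V)|\le 3\sqrt{m\ln(1/\delta)}$ for independent $S_1,S_2$. The proof uses McDiarmid for concentration, and symmetrization plus Efron--Stein for the mean, bounding the squared $\ell_2$ change per swap by the product of the $\ell_\infty$ and $\ell_1$ changes. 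On that event, which has probability $1-\rho/2$, a single uniform threshold $t\in(0,T_{\max}]$ with $T_{\max}=\frac{12d}{\rho}\sqrt{m\log(2/\rho)}$ avoids all (at most $2d$) intervals between $n_{S_1}(V)$ and $n_{S_2}(V)$, except with probability $\rho/2$. Taking the highest-dimensional heavy subspace leaves at most $d^2T_{\max}\le\eps m$ points uncovered. Your $d$-round heavy-coset scheme has no such stability statement, and you leave both of its crux steps unproven.

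Second, Definition~\ref{def:replicability} quantifies over \emph{all} distributions on $\bbF_2^d\times\bbF_2$, but your claim that the output depends on $V$ alone relies on realizability. Suppose $x=e_1$ always and $y$ is a fair coin. Both runs get $V=\mathrm{span}(e_1)$, but the label you infer for $e_1$ comes from a single sample point and is independent across runs, so the outputs disagree with probability $1/2$. The paper avoids this by running \RepLinearSpan\ on the lifted points $(x_i,y_i)\in\bbF_2^{d+1}$. A labeled set is realizable iff $e_{d+1}$ is not in its span (Claim~\ref{clm:condition_realizability}). Hence returning $\bot$ iff $e_{d+1}\in U^*$ is a function of $U^*$, and otherwise the consistent set is $\set{w:(w,1)\in(U^*)^\perp}$, again a function of $U^*$ alone. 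A generic realizable-to-arbitrary replicability transformation would also close this gap, but you would need to state it and account for its sample cost.
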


Our main building block for proving Theorem~\ref{thm:intro:parities} is a new replicable algorithm that, given a set of input vectors over $\bbF^d$ (for an arbitrary field $\bbF$), outputs a subspace of their linear span that covers most of them.

\begin{theorem}[Replicable Linear Span]\label{thm:intro:linearspan}
    There exists a polynomial-time algorithm $\RepLinearSpan$ that is $\rho$-replicable, and given input vectors $v_1,\ldots,v_m \in \mathbb{F}^d$ for $m \geq poly(d,1/\rho,1/\eps)$, outputs a subspace $V \subseteq Span\set{v_1,\ldots,v_m}$ that covers $1-\eps$ fraction of the vectors.
\end{theorem}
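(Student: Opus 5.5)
We plan a randomized-threshold construction in the style of replicable heavy hitters, applied dimension by dimension. The samples come i.i.d.\ from a distribution $\cD$ over $\bbF^d$, and replicability is with respect to two independent samples and shared internal randomness. For a subspace $U$, write $p(U) = \Pr_{v\sim\cD}[v\in U]$ and let $\hat p(U)$ be its empirical estimate on the sample.

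The algorithm grows a chain $U_0=\{0\}\subsetneq U_1\subsetneq\cdots$ of subspaces. Each $U_i$ is the span of sample vectors, so the output lies in $Span\set{v_1,\ldots,v_m}$. The chain has length at most $d$.

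\emph{Construction.} At step $i$, with current subspace $U$, we:
\begin{enumerate}[label=(\roman*)]
\item Draw a fresh batch of samples and check whether $\hat p(U)\ge 1-\tau_i$. Here $\tau_i$ is a random threshold, drawn from shared randomness uniformly in $[\eps/2,\eps]$. If the check passes, we output $U$.
\item Otherwise we must choose an extension $U'\supsetneq U$ canonically, so that both runs pick the same one with high probability. We take $U'$ to be the \emph{smallest} subspace $W\supsetneq U$ among a canonical family, chosen so that $p(W)$ crosses a random threshold.
\end{enumerate}

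\emph{Choosing the extension.} The cleanest candidate is the following. Over the samples not in $U$, consider, for each dimension $k$, whether some subspace of dimension $\dim U+k$ containing $U$ captures at least a random-threshold amount of mass. This has two problems: the number of such subspaces is exponential, and they are not unique.

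We will instead use the observation that, for a mass threshold $t$, the intersection of all subspaces $W\supseteq U$ with $p(W)\ge 1-t$ need not satisfy the threshold. So uniqueness must come from another source. We would try \emph{minimal heavy subspaces} instead: a subspace $W$ is $\alpha$-heavy if $p(W)\ge\alpha$. If $W_1,W_2$ are heavy and minimal, with every proper subspace having mass much smaller, then $p(W_1\cap W_2)$ is small. This forces $p(W_1)+p(W_2)$ to be at most about $1$, so heavy minimal subspaces are few, roughly $1/\alpha$ per level.

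Concretely, we process quotient space $\bbF^d/U$ dimension by dimension. For $k=1,\ldots,d$ we use a random threshold $\alpha_k$ from a geometric grid in which consecutive levels are separated by factors of $\mathrm{poly}(\eps/d)$. We then find all $k$-dimensional subspaces (mod $U$) with mass above $\alpha_k$ whose proper subspaces all have mass below $\alpha_{k-1}$. These can be found efficiently: any such subspace is spanned by $k$ sample points, and it shows up as the span of a random $k$-subset of samples with probability about $\alpha_k^k$. That is polynomial only for constant $k$, so this step is a concern.

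The alternative we would pursue first is \emph{deterministic span-closure}. Repeatedly take $W=Span(S)$, where $S$ is the set of sample vectors lying in some subspace we already know is heavy. Then use Gaussian elimination to test whether a small subset of the samples spans a space containing many others.

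\emph{Analysis.} Correctness is immediate: whenever the algorithm stops, coverage is at least $1-\eps$ up to sampling error. Each failed check means mass at least $\eps/2$ lies outside $U$, so progress is guaranteed provided a heavy extension exists. Existence holds because the $\le d$-dimensional space $Span(\mathrm{supp})$ covers everything, so some minimal subspace outside $U$ has mass at least $\eps/(2\cdot\mathrm{poly}(d))$ relative to the grid.

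Replicability follows from a union bound over the $\le d$ steps. At each step the two runs disagree only if some relevant population mass lies within the estimation error $\gamma$ of a random threshold. This happens with probability $O(\gamma\cdot\#\text{candidates}/\text{threshold width})$. Choosing $\gamma=\rho\,\mathrm{poly}(\eps/d)$ gives sample size $\mathrm{poly}(d,1/\rho,1/\eps)$.

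\emph{Main obstacle.} The hard step is efficiently and canonically identifying the next heavy subspace: bounding the number of candidates and finding them in polynomial time rather than $m^k$. This is where we expect to spend the effort, possibly by replacing dimension-by-dimension search with a greedy rule such as adding the single sample direction whose span with $U$ gains the most mass, combined with randomized thresholds.
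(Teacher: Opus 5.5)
There is a genuine gap, and it is the one you flag yourself: you never give a polynomial-time, canonical rule for choosing the next subspace, and none of the fallbacks you list supplies one.
\begin{itemize}
\item \emph{Span-closure.} Taking the span of all sample vectors inside a heavy subspace is exactly the non-replicable algorithm. Rare vectors of mass about $1/m$ change the span between two independent samples; Proposition~\ref{prop:span_instability} gives one distribution where this happens for every $m \le 2^{\sqrt{d}-1}$.
\item \emph{Greedy single-direction rule.} This breaks when the mass outside $U$ is spread over exponentially many directions. Every single-direction gain is then exponentially small, and the empirical gains all tie at about $1/m$. So the sample decides which vector enters the chain. Once a vector from a light part of the distribution enters, the output is forced to contain it, and two runs end at different subspaces.
\item \emph{Minimal heavy subspaces.} Besides the $m^k$ search, there is a quantitative problem. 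With $d$ levels separated by factors $\mathrm{poly}(\varepsilon/d)$, the lowest threshold drops to $(\varepsilon/d)^{\Omega(d)}$. The estimation error must then be exponentially small rather than $\rho\,\mathrm{poly}(\varepsilon/d)$, and the sample size becomes exponential.
\item \emph{Coverage.} You test coverage on fresh batches against population masses, so you only get coverage up to sampling error. The statement asks that a $1-\varepsilon$ fraction of the given input vectors be covered, and the paper gets this with probability $1$ for every input.
\end{itemize}

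The missing idea is to drop population masses and candidate search altogether, and instead threshold a deterministic, low-sensitivity decomposition of the sample.
\begin{itemize}
\item The Stable Partition of \cite{kaplan2025differentially} greedily splits $S$ into linearly independent sets. Their spans form a nested chain of at most $d$ subspaces. The multiplicities $n_S(V)$ have $\ell_\infty$-sensitivity $1$ and $\ell_1$-sensitivity $2$.
\item McDiarmid plus a symmetrization and Efron--Stein argument gives, for independent $S_1,S_2$ and uniformly over all subspaces, $\sup_V|n_{S_1}(V)-n_{S_2}(V)|\le 3\sqrt{m\ln(2/\rho)}$ with probability $1-\rho/2$. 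The $\ell_2$ bound this needs comes from the product of the $\ell_\infty$ and $\ell_1$ bounds.
\item Take a single threshold $t$ uniform on $(0,T_{\max}]$ with $T_{\max}=\frac{12d}{\rho}\sqrt{m\ln(2/\rho)}$. Only at most $2d$ subspaces are involved, so both runs select the same heavy set except with probability $\rho/2$.
\item Output the maximum-dimensional heavy subspace. The only uncovered vectors lie in light sets spanning larger subspaces of the chain. There are at most $d^2T_{\max}\le\varepsilon m$ of them once $m\ge 144d^6\ln(2/\rho)/(\rho^2\varepsilon^2)$.
\end{itemize}
The nesting is what caps the candidates at $d$ and makes large dimension jumps automatic: for spread-out distributions, most partition sets already span everything. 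This is precisely what your chain-growing scheme cannot provide.
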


Theorem~\ref{thm:intro:linearspan} implies that if the $m$ input vectors are drawn i.i.d.\ from a distribution $\cD$ over $\mathbb{F}^d$, then by standard generalization theorems, with probability at least $1-\delta$, the output subspace $V$ also covers $\approx 1-\eps$ fraction of the distribution $\cD$ (or equivalently, covers at least $1-\eps$ fraction of the distribution using slightly higher sample complexity).

\begin{corollary}
    Given $m \geq poly(d,1/\rho,1/\eps, \log(1/\delta))$ i.i.d.\ samples from an unknown distribution $\cD$, algorithm $\RepLinearSpan$ outputs a subspace $V$ such that, with probability at least $1-\delta$, $\Pr_{v \sim \cD}[v \in V] \geq 1-\eps$.
\end{corollary}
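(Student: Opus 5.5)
We derive the corollary from Theorem~\ref{thm:intro:linearspan} together with a uniform-convergence bound over the class of subspaces of $\mathbb{F}^d$. The plan is to run $\RepLinearSpan$ with accuracy parameter $\eps/2$ on $m$ i.i.d.\ samples $v_1,\ldots,v_m \sim \cD$. Theorem~\ref{thm:intro:linearspan} is a worst-case statement about the input set, so for \emph{every} realization of the sample the output $V$ covers at least a $1-\eps/2$ fraction of $v_1,\ldots,v_m$. It then suffices to show that, with probability at least $1-\delta$ over the sample, every subspace $U$ of empirical mass at least $1-\eps/2$ has true mass $\Pr_{v\sim\cD}[v\in U] \geq 1-\eps$. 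Replicability is untouched, since we only change the sample size and the accuracy parameter passed to the algorithm.

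The key structural observation is that $V \subseteq \mathrm{Span}\{v_1,\ldots,v_m\}$, so $V$ is spanned by at most $d$ of the sample vectors. First, replace $V$ by a basis $B \subseteq V$ of size $k\le d$. Each element of $B$ is a linear combination of sample points, so $V$ is spanned by some subset $S$ of at most $d$ sample points: choose a basis of $\mathrm{Span}(S')$ for a suitable subset $S'$ of samples whose span contains $V$. However, $V$ may be a strict subspace of the span of those points, so this does not immediately give a compression scheme.

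To handle this, I would use a different counting argument. Over a finite field, the number of subspaces of $\mathbb{F}^d$ is at most $|\mathbb{F}|^{d^2}$. Since the theorem is stated for an arbitrary field, I would avoid relying on this count. Instead I would use the fact that the class $\{\mathbb{1}[v\in U] : U \text{ a subspace}\}$ has VC dimension at most $d$. Indeed, if points $u_1,\ldots,u_{d+1}$ were shattered, they would be linearly dependent, say $u_j \in \mathrm{Span}\{u_i : i\neq j\}$. Then no subspace contains all $u_i$ with $i\neq j$ while excluding $u_j$, which contradicts shattering. The standard relative (multiplicative) uniform-convergence bound for VC classes then says that with
\[
m = O\!\left(\frac{d\log(1/\eps)+\log(1/\delta)}{\eps}\right)
\]
samples, with probability at least $1-\delta$, every $U$ with true mass of the complement greater than $\eps$ has empirical complement mass greater than $\eps/2$. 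Taking $m$ to be the maximum of this quantity and the $poly(d,1/\rho,2/\eps)$ requirement of Theorem~\ref{thm:intro:linearspan} gives the stated $poly(d,1/\rho,1/\eps,\log(1/\delta))$ bound.

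The main point requiring care is that the generalization step must hold uniformly over the data-dependent output $V$. The VC argument handles this, and so it is the crux. The only other subtlety is that the guarantee of Theorem~\ref{thm:intro:linearspan} is deterministic over the inputs, possibly up to a failure probability over the algorithm's internal coins. If the theorem only holds with some probability, I would absorb that into $\delta$ by a union bound, running the algorithm with failure parameter $\delta/2$ and the convergence bound with $\delta/2$.
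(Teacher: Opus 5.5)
Your proposal is correct and is essentially the paper's own argument. The paper only says that the worst-case empirical coverage of Theorem~\ref{thm:intro:linearspan} plus ``standard generalization theorems'' gives the corollary; your bound of $d$ on the VC dimension of subspace indicators, combined with the relative deviation bound, is exactly the detail that makes this rigorous over an arbitrary (possibly infinite) field, where a finite union bound like the one in the parity proof of Appendix~\ref{sec:learning_parities_proof} would not apply, and the opening inference of your discarded second paragraph ($V\subseteq \mathrm{Span}(S)$ implies $V$ is spanned by sample points) is false in general but harmless, and it happens to hold here since $V^*=\mathrm{span}(A_j)$ for some part $A_j$ of the stable partition, so the compression route would also have worked.
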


Finally, we leverage our replicable parity learner to reveal a fundamental limitation in algorithmically converting replicable algorithms to pure differentially private algorithms. Our replicable parity learner achieves sample complexity dependency on the failure parameter, scaling as $O(\log(1/\delta)/\varepsilon)$. By contrasting this with known pure DP lower bounds for parity learning \cite{georgiev2022privacy}, we establish the following corollary.

\begin{restatable}{corollary}{repdpimpos}
Let $\mathcal{C}$ denote the set of all classes for which computationally efficient pure private algorithms exist. Assuming $\text{RP} \neq \text{NP}$, there is no \remove{does not exist a}generic polynomial-time transformation that converts an arbitrary efficient replicable algorithm for a class in $\mathcal{C}$ into an efficient pure DP algorithm while preserving the sample complexity dependence on the failure probability.
\end{restatable}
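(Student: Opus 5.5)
The plan is a proof by contradiction that combines Theorem~\ref{thm:intro:parities} with the pure-DP lower bound of \cite{georgiev2022privacy}. First we check that parities over $\bbF_2^d$ belong to $\mathcal{C}$: the efficient pure private learner of \cite{kasiviswanathan2011can} runs in polynomial time, so the transformation in question must apply to our replicable learner.

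Second, we pin down how our learner's sample complexity depends on $\delta$. We would verify (or state, following the discussion in the introduction) that the algorithm behind Theorem~\ref{thm:intro:parities} uses $m(d,\rho,\eps,\delta) = \mathrm{poly}(d,1/\rho,1/\eps) + O(\log(1/\delta)/\eps)$ samples. The failure probability should enter only through a final validation/generalization step: a hypothesis consistent with the subspace returned by $\RepLinearSpan$ is checked on $O(\log(1/\delta)/\eps)$ fresh samples. The dependence is therefore additive, and for fixed $d,\rho,\eps$ it grows only like $\log(1/\delta)$ with no factor of $d$.

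Third, suppose a generic polynomial-time transformation $T$ exists that maps an efficient replicable learner into an efficient pure $\eps_{DP}$-DP learner and preserves the sample-complexity dependence on $\delta$, meaning the output uses $\mathrm{poly}(\cdot) + O(\log(1/\delta)/\eps)$ samples. Applying $T$ to our learner gives an efficient pure DP parity learner whose sample count has only an additive $O(\log(1/\delta)/\eps)$ dependence on $\delta$. We then invoke \cite{georgiev2022privacy}. Their connection between privacy and robustness shows that a pure DP learner with such a dependence can be turned into a learner that tolerates a constant fraction of adversarial corruptions, i.e., it learns parities with noise or agnostically. It is efficient, but it may be randomized. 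Agnostic learning of parities is NP-hard \citep{haastad2001some}, so this randomized polynomial-time procedure would place NP in RP (or BPP, and then RP by the standard search-to-decision argument for NP). This contradicts $\text{RP}\neq\text{NP}$.

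The main obstacle is the third step: stating precisely what form of the \cite{georgiev2022privacy} result is needed and matching its quantifiers to ours. Concretely, we must check that "preserving the dependence on $\delta$" is strong enough for the resulting pure DP learner to fall into the regime where the robustness reduction applies, taking $\delta$ as small as $2^{-\Theta(d)}$ so that the additive $\log(1/\delta)/\eps$ term is what rules out pure DP with polynomial samples. We must also check that the reduction produces an RP-type algorithm rather than only a BPP one. I would first state the corresponding lemma from \cite{georgiev2022privacy} as a black box of the form "pure DP with $o(d\log(1/\delta))$-type dependence implies efficient robust learning", and then reduce to the known hardness of robustly learning parities.
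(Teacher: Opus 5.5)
Your top-level argument matches the paper's proof. Parities lie in $\mathcal{C}$ by \cite{kasiviswanathan2011can}, so applying the hypothetical transformation to Algorithm~\ref{alg:parity2} yields an efficient pure-DP parity learner with $O(\log(1/\delta)/\eps)$ dependence on $\delta$, and this contradicts \cite{georgiev2022privacy}. The one difference is that the paper cites their Corollary~4.3 (Theorem~\ref{thm:gh22}) as a finished black box. Its ``for any $\delta>0$'' hypothesis and $\omega(\log(1/\delta)/(\alpha\eps))$ conclusion already match your setting, and the NP-hardness and RP-versus-BPP bookkeeping are inside it, so the step you call the main obstacle never arises. Reopening it (group privacy $\Rightarrow$ robustness to a constant corruption rate $\Rightarrow$ proper agnostic learning $\Rightarrow$ H{\aa}stad) is sound and shows why additive dependence is the crux, but it needs two fixes. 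First, $\delta=2^{-\Theta(d)}$ is too large: after $k$ corruptions the failure probability is only bounded by $e^{\alpha k}\delta$, so the tolerated fraction is about $\log(1/\delta)/(\alpha m)$. That is a constant only when $\log(1/\delta)$ dominates the polynomial part of $m$ (the $d^6/(\rho^2\eps^2)$ term). Second, H{\aa}stad's result needs the learner to output a parity, which Definition~\ref{def:PAC} provides. Separately, your account of where $\delta$ enters is wrong. Algorithm~\ref{alg:parity2} has no fresh-sample validation, and checking a single hypothesis on fresh data cannot raise its success probability anyway. The additive term arises because the coverage guarantee of \RepLinearSpan{} holds with probability $1$ (Claim~\ref{clm:spancover}), so on a realizable sample the output always has empirical error at most $\eps/2$. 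The only failure event is then uniform convergence over the $2^d$ parities on the same sample, costing $O((d+\log(1/\delta))/\eps)$ samples. You can simply cite Theorem~\ref{thm:parity_learner2} for this.
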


\subsection{Paper Structure}
We provide an overview of our main techniques in Section \ref{sec:tech}. In Section \ref{sec:related_work}, we provide a discussion of additional related work. Technical preliminaries used throughout the paper appear in Section \ref{sec:prelim}.
In Section~\ref{sec:linspan}, we present our replicable linear span algorithm, and in Section~\ref{sec:learning_parities}, we use it to construct our replicable parity learner. In Section \ref{sec:conversion_impossibility}, we leverage our replicable parity learner to establish an impossibility result regarding conversions to pure differential privacy.

Missing proofs appear in Appendices~\ref{sec:linear_span_proof}, \ref{sec:learning_parities_proof}, \ref{sec:main_lem}, and \ref{sec:adversarial_distribution}.
Appendix \ref{sec:empirical} contains our empirical evaluation, and in Appendix \ref{sec:stable_partition}, we recall the stable partition algorithm from \cite{kaplan2025differentially}, which our replicable linear span algorithm uses as a subroutine.
\section{Technical Overview}\label{sec:tech}

In this section, we provide overviews of our proofs. We start with an overview of the replicable linear span algorithm (Theorem~\ref{thm:intro:linearspan}). For simplicity, we focus here on $\eps = \rho = 0.1$. 
Namely, our goal is to design an algorithm $\cA$ that given input vectors $S = (v_1,\ldots,v_m) \in (\mathbb{F}^d)^m$, outputs a subspace $V \subseteq Span\set{v_1,\ldots,v_m}$ that covers $0.9$ fraction of the vectors in $S$, while guaranteeing replicability: For any distribution $\cD$ over $\mathbb{F}^d$, if we draw two independent i.i.d.\ datasets $S_1, S_2 \sim \cD^m$ and internal random coins $r$ for $\cA$, we obtain that $\cA(S_1;r) = \cA(S_2,r)$ w.p. at least $0.9$.

Without replicability restriction, we can use the deterministic algorithm $\cA(S) = Span(S)$. In fact, when we restrict our attention to the binary field $\bbF_2$, this algorithm does provide replicability when $\cD$ is the uniform distribution over $\mathbb{F}_2^d$, because in that case, when $m \geq O(d)$, then it holds w.h.p.\ over $S \sim \cD^m$ that $Span(S) = \mathbb{F}_2^d$. Indeed, this property is used by \cite{impagliazzo2022reproducibility} to learn parities with replicability that is restricted to the uniform distribution. But in general, this algorithm is not replicable, even for vector spaces over the binary field. For example, let $V \subset \bbF_2^d$ be a subspace of dimension strictly less than $d$, let $u \in \bbF_2^d \setminus V$, and consider the distribution $\cD_m$ that draws a uniform element in $V$ w.p.\ $1-1/m$, and $u$ w.p.\ $1/m$. Clearly, the algorithm that given $m$ vectors and outputs their linear span is not $0.9$-replicable because, when the vectors are i.i.d.\ according to $\cD_m$, the algorithm outputs $V$ w.p. $(1-1/m)^m \approx 0.37$ and $Span(V,u)$ otherwise. 
One might hope to achieve replicability by using subsets of the input data, with the subset sizes chosen at random, to avoid distributions that are ``bad'' for specific values of $m$. Yet, we show that there exists a single distribution $\cD$ that is ``bad'' for every $m$, yielding that such approaches are inherently non-replicable, regardless of the choice of subset sizes (the proof appears in Appendix~\ref{sec:adversarial_distribution}).

\begin{proposition}
\label{prop:span_instability}
Let $d \ge 4$ be a perfect square. There exists a distribution $\mathcal{D}$ over $\mathbb{F}_2^d$, such that for any sample size  $ m \le 2^{\sqrt{d}-1}$,  for two independent samples $S_1, S_2 \sim \mathcal{D}^m$, the probability that they generate the exact same linear span decays exponentially with $\sqrt{d}$. Furthermore, the dimension of their sum exceeds the dimension of their intersection by $\Omega(\sqrt{d})$ with constant probability. 
\end{proposition}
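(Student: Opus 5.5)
Let $k=\sqrt d$ and split $\bbF_2^d=\bigoplus_{j=1}^{k}B_j$ into $k$ coordinate blocks of dimension $k$. Let $e_j$ denote the first standard basis vector of block $B_j$. The distribution $\cD$ first picks a scale index $j\in\{1,\dots,k\}$ and then outputs $e_j$ with probability $p_j=2^{-j}$ and $0$ otherwise; the leftover mass goes to $0$. The spans are then
\[
Span(S)=Span\{e_j : j\in J(S)\},
\]
where $J(S)$ is the random set of indices appearing in $S$. The indicators $X_j=\mathbf 1[j\in J(S)]$ are negatively associated, and each satisfies $\Pr[X_j=1]=1-(1-q_j)^m$, where $q_j=\Theta(2^{-j}/k)$ is the per-sample probability of $e_j$.

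The geometric scales are the point of the construction. For every $m\le 2^{k-1}$ there is a window of $\Omega(k)$ indices $j$ with $q_j m\in[c_1,c_2]$, i.e.\ $mq_j=\Theta(1)$. On this window each $X_j$ is a nondegenerate coin, with probability in $[\alpha,1-\alpha]$ for a constant $\alpha$. The window fails to have constant size only when $m$ is so large that every scale saturates. The condition $m\le 2^{\sqrt d-1}$ is exactly what rules that out, though the $1/k$ factor in $q_j$ forces some care with constants.

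Two samples $S_1,S_2$ give $Span(S_1)=Span(S_2)$ iff $J(S_1)=J(S_2)$. That event requires agreement on every window index, so
\[
\Pr[J(S_1)=J(S_2)]\le\prod_{j\in W}\bigl(1-2\alpha(1-\alpha)\bigr)=2^{-\Omega(k)}.
\]
The product bound is justified either by Poissonization, which makes the $X_j$ independent, or by negative association.

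For the second claim, note that
\[
\dim(Span_1+Span_2)-\dim(Span_1\cap Span_2)=|J_1\triangle J_2|.
\]
This is a sum of $\Omega(k)$ indicators over the window, each with mean at least $2\alpha(1-\alpha)$. A Chernoff bound, valid under Poissonization or negative association, then shows it is $\Omega(k)$ with constant (indeed high) probability.

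The main obstacle is the bookkeeping that makes the window exist uniformly for all $m\le 2^{k-1}$. Two things must be handled:
\begin{itemize}
\item getting the scales right, possibly by placing weight $2^{-j}$ directly on $e_j$ without the $1/k$ factor and putting the remaining mass on $0$;
\item handling the dependence among the $X_j$ induced by the multinomial sampling.
\end{itemize}
I would handle the dependence with Poissonization: replace $m$ by $\mathrm{Poi}(m)$, then de-Poissonize by monotonicity of $J$ in $m$. Using $k$ blocks of dimension $k$ rather than just $k$ basis vectors is unnecessary for this argument, but it fits the perfect-square hypothesis.
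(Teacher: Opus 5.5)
\textbf{The construction fails.} The step that breaks is the claimed window of $\Omega(k)$ indices with $mq_j=\Theta(1)$. Your scales are geometric, $q_{j+1}=q_j/2$. So for fixed constants $c_1<c_2$, at most $\log_2(c_2/c_1)+1=O(1)$ indices satisfy $mq_j\in[c_1,c_2]$, for every $m$, and the product bound yields only a constant. Better constants will not fix this: with one support vector per scale, the distribution really does produce stable spans. Write $\pi_j=\Pr[X_j=1]$. Then $\pi_j\le mq_j$ and $1-\pi_j\le e^{-mq_j}$, so
\[
\mathbb{E}\,|J_1\triangle J_2|=\sum_j 2\pi_j(1-\pi_j)\le 2\sum_j\min\bigl(mq_j,e^{-mq_j}\bigr)=O(1)
\]
uniformly in $m$ and $d$. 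This holds because $mq_j$ halves from one index to the next, so the terms with $mq_j\le 1$ and those with $mq_j>1$ both form rapidly convergent series. Two consequences follow. By Markov, $|J_1\triangle J_2|\ge c\sqrt d$ has probability only $O(1/\sqrt d)$. And the agreement probability is bounded below by an absolute constant; in your Poissonized model it equals $\prod_j(1-2\pi_j(1-\pi_j))\ge 4^{-\mathbb{E}|J_1\triangle J_2|}$. Dropping the $1/k$ factor, as you suggest, changes nothing.

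\textbf{What is missing, and how the paper supplies it.} You need $\Theta(\sqrt d)$ distinct vectors at one and the same scale, and this is exactly what the blocks provide; your remark that they are unnecessary is the wrong conclusion. The paper puts mass $2^{-j}/\sqrt d$ on each of the $\sqrt d$ basis vectors of $B_j$. For a given $m$ it looks only at the block $j^*=\lfloor\log_2(m/\sqrt d)\rfloor+1$, on which every vector has mass $\Theta(1/m)$. Each of those $\sqrt d$ vectors then lies in $S_1\triangle S_2$ with constant probability $2pq$. The geometric scales only ensure that such a block exists for each $m$ in range. The exponential decay and the $\Omega(\sqrt d)$ gap both come from the multiplicity inside that block. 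The second claim then follows from linearity of expectation plus a reverse Markov bound using $X\le\sqrt d$, with no concentration needed.

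\textbf{On dependence.} Your concern about multinomial dependence is legitimate: the paper writes $\Pr[X=0]=(1-2pq)^{\sqrt d}$ as if the coordinates were independent. However, the agreement event is not monotone, so neither negative association nor de-Poissonization by monotonicity applies to it verbatim. A clean repair is
\[
\Pr[X=0]=\sum_A\Pr[S\cap B_{j^*}=A]^2\le\max_A\Pr[S\cap B_{j^*}=A]\le\max(p,q)^{\sqrt d/2}.
\]
The last step applies negative association separately to two monotone events: the increasing event that all of $A$ appears, and the decreasing event that none of $B_{j^*}\setminus A$ appears.
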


\remove{
We prove Proposition~\ref{prop:span_instability} using the following distribution $\cD$ over the standard basis $\{e_1, \dots, e_d\}$ of $\mathbb{F}_2^d$:  We partition  $\{e_1, \dots, e_d\}$ into $\sqrt{d}$ disjoint blocks $B_1, \dots, B_{\sqrt{d}}$, each containing exactly $\sqrt{d}$ vectors. We define the distribution $\mathcal{D}$ as follows: For each block index $j \in \{1, \dots, \sqrt{d}\}$ and for each basis vector $e_k \in B_j$, we assign the probability mass
\[
    \mathcal{D}(e_k) = \frac{1}{2^j \sqrt{d}}.
\]
The remaining probability mass is assigned to the zero vector.

For instance, consider the following distribution over the standard basis $\{e_1, \dots, e_d\}$ of $\mathbb{F}_2^d$. We partition  $\{e_1, \dots, e_d\}$  into $\sqrt{d}$ disjoint blocks $B_1, \dots, B_{\sqrt{d}}$, each containing exactly $\sqrt{d}$ vectors. We define the distribution $\mathcal{D}$ as follows: For each block index $j \in \{1, \dots, \sqrt{d}\}$ and for each basis vector $e_k \in B_j$, we assign the probability mass
\[
    \mathcal{D}(e_k) = \frac{1}{2^j \sqrt{d}}.
\]
The remaining probability mass is assigned to the zero vector.  The following proposition shows that for any polynomial sample size $m = \text{poly}(d)$, the linear spans of two independent samples drawn from $\mathcal{D}$ will diverge with high probability. 
}

To overcome this fundamental instability, we leverage the Stable Partition algorithm introduced by \cite{kaplan2025differentially}. Rather than using a fixed or randomized partition (using a data-independent method), the Stable Partition algorithm iteratively extracts linearly independent subsets from the data sequence. In each pass, it greedily accumulates an independent subset, and then removes this subset from the sequence. It repeats the process until all vectors are assigned, and then outputs the resulting partition.



\paragraph{Replicable Linear Span (Theorem \ref{thm:intro:linearspan})}
To use the stable partition for finding a replicable subspace, we track the multiplicity $n_S(V)$, the number of sets in the partition spanning $V$. Formally,

\begin{definition}
Given a sequence of $m$ vectors $S \in (\bbF^d)^m$, we define the multiplicity of a subspace $V \subseteq \bbF^d$ by 
\begin{align*}
    n_{S}(V) := | \set{ j : \mathrm{span}(A_j) = V} |,
\end{align*}
where $\set{A_1,\ldots,A_M}$ is the stable partition of $S$. 
\end{definition}

Our strategy filters out "light" subspaces (those with low multiplicity) to isolate a replicable set of candidate subspaces. We achieve this by comparing the multiplicities to a random threshold $t$, chosen uniformly from a large interval. 

The core challenge in this approach is ensuring replicability: for two independent executions, one with input $S_1$ and the other with input $S_2$, to compute the same set of candidate subspaces, the shared threshold $t$ must not fall into the gap between $n_{S_1}(V)$ and $n_{S_2}(V)$ for any generated subspace $V$. To do so, we bound the gap $|n_{S_1}(V)-n_{S_2}(V)|$ for every $V$. 

While \cite{kaplan2025differentially} studied the stable partition in the context of differential privacy, i.e., analyzing its sensitivity to single-element modifications, we extend its properties to the replicability setting, where we handle the broader challenge of entirely new samples. In our key technical lemma (Lemma \ref{lem:phiwhp}), we bound the maximum gap between multiplicities obtained from two independent samples, uniformly over all subspaces. The lemma is proven in Appendix~\ref{sec:main_lem}.

\begin{lemma}\label{lem:phiwhp}
For any field $\bbF$ and any distribution $\cD$ over $\bbF^d$, the following holds for any $m \in \mathbb{N}$ and $\delta \in (0,1)$:
\[
Pr_{S_1,S_2\sim\mathcal{D}^m}\Big[\exists V\subseteq \mathbb{F}^d : |n_{S_1}(V) - n_{S_2}(V)| \geq  3\sqrt{m\ln(1/\delta)} \Big] \leq \delta
\]
\end{lemma}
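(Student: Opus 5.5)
The plan is to combine a structural fact about the stable partition with a bounded-differences (McDiarmid) argument, applied to a single random variable that controls all subspaces simultaneously.

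\textbf{Step 1: the spans form a chain.} Write the stable partition of $S$ as $A_1,\ldots,A_M$, where $A_j$ is extracted greedily from what remains after removing $A_1,\ldots,A_{j-1}$. Every vector left over after the $j$-th pass was rejected by $A_j$, so it lies in $\mathrm{span}(A_j)$. Hence
\[
\mathrm{span}(A_1)\supseteq \mathrm{span}(A_2)\supseteq\cdots\supseteq \mathrm{span}(A_M).
\]
Consequently only subspaces on a chain of length at most $d+1$ have nonzero multiplicity, and $V\mapsto n_S(V)$ is the ``run-length profile'' of a monotone sequence of subspaces. It is therefore natural to work with the cumulative counts
\[
c_S(V):=|\{j:\mathrm{span}(A_j)\supseteq V\}|,
\]
from which each $n_S(V)$ is recovered as a difference of two such counts along the chain.

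\textbf{Step 2: bounded differences.} I would extend the sensitivity analysis of \cite{kaplan2025differentially} (recalled in Appendix~\ref{sec:stable_partition}) from insertion or deletion of one element to replacement of one element. The target statement is that replacing one vector of $S$ changes each $n_S(V)$, uniformly in $V$, by at most an absolute constant $c$ (I expect $c\le 1$ or $2$). This should follow by tracking how the greedy passes shift when a single element is altered: the altered element perturbs at most one ``boundary'' between consecutive runs of equal span. Granting this, the statistic
\[
\Phi(S_1,S_2):=\max_V |n_{S_1}(V)-n_{S_2}(V)|
\]
is a function of $2m$ independent vectors with bounded differences $c$. McDiarmid's inequality then gives
\[
\Pr\bigl[\Phi\ge \mathbb{E}\Phi+t\bigr]\le \exp\bigl(-t^2/(mc^2)\bigr).
\]
With $c=1$, the choice $t=\sqrt{m\ln(1/\delta)}$ already yields probability at most $\delta$. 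This leaves a budget of $2\sqrt{m\ln(1/\delta)}$ for $\mathbb{E}\Phi$ in order to reach the stated threshold $3\sqrt{m\ln(1/\delta)}$.

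\textbf{Step 3 (main obstacle): bounding $\mathbb{E}\Phi$ without any dependence on $d$ or the field.} The two chains coming from $S_1$ and $S_2$ are random and may differ, so a naive union bound over candidate subspaces (for instance, all spans of subsets of $S_1\cup S_2$) would introduce factors like $d\log m$, which the statement does not allow. I would attack this in two stages.
\begin{itemize}[leftmargin=*]
\item First, use exchangeability: condition on the multiset $S_1\cup S_2$, so that the split into $S_1$ and $S_2$ is a uniformly random partition. Then interpolate from $S_1$ to $S_2$ by hybrids that replace one coordinate at a time; each hybrid has the same distribution, so for every fixed $V$ the sequence of differences $n_{H_k}(V)-n_{S_2}(V)$ is controlled by a martingale with increments bounded by $c$.
\item Second, exploit the chain structure from Step 1 to control the maximum over $V$ by a maximal inequality for a single monotone process. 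The idea is to index the relevant subspaces by their position along the chain of the combined sample, so that the maximum over $V$ becomes a maximum of a martingale over ``time''. Doob's inequality then gives $\mathbb{E}\Phi=O(\sqrt m)$ with a constant small enough to fit the budget.
\end{itemize}

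If this maximal-inequality step turns out to be too lossy, the fallback is to avoid bounding $\mathbb{E}\Phi$ separately. Instead one would prove directly a sub-Gaussian tail for $\Phi$, via the martingale along the hybrid sequence. This works provided one can show that the maximizing $V$ may be taken from a chain that is fixed in advance (for instance, by the conditioning on the union). I expect this uniform-in-$V$ step to be where most of the work and the constant $3$ come from.
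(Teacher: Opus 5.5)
Your Steps 1--2 match the paper's concentration step. The replacement sensitivity you plan to derive is already item~2 of Theorem~\ref{thm:stablePartition}, so $\Phi$ has bounded differences $1$ in each of its $2m$ coordinates, and McDiarmid gives $\Phi\le\mathbb{E}\Phi+\sqrt{m\ln(1/\delta)}$ with probability $1-\delta$. Conditioning on the pooled sample and splitting it at random is also the paper's symmetrization: pairs $(t_i,t_i')$ with swap signs $\sigma_i$, so that $g_V:=n_{S_1}(V)-n_{S_2}(V)$ satisfies $\mathbb{E}_\sigma g_V=0$ for each fixed pooled sample.

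\textbf{The gap is Step~3, which is the real content of the lemma.}
\begin{itemize}
\item Doob's maximal inequality controls $\max_k|M_k|$ for a \emph{single} martingale. You need $\max_V|g_V|$ over a whole family of martingales indexed by $V$, and ``indexing $V$ by position along a chain'' does not turn one into the other.
\item The relevant subspaces are not on any chain fixed by the conditioning. For example, the pooled sample $\{e_1,e_2\}$ split as $S_1=(e_1)$, $S_2=(e_2)$ produces $\mathrm{span}(e_1)$ and $\mathrm{span}(e_2)$. These are incomparable, and neither lies on the pooled sample's chain. Over all splits, the candidate subspaces can be exponentially many in $m$.
\item Hence the proviso of your fallback (that the maximizing $V$ comes from a chain fixed in advance) is false. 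Any repair by a union bound over the (at most $2d$) realized subspaces would at best bring back a $\sqrt{\log d}$ factor, which the statement does not allow.
\item The hybrid path $k\mapsto n_{H_k}(V)$ is not itself a martingale. Equal marginals plus bounded increments only give an $O(m)$ bound.
\end{itemize}
The underlying issue is that $\ell_\infty$-sensitivity alone does not prevent one replacement from moving many multiplicities at once. Nothing built on it alone can give a $d$-free uniform bound.

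\textbf{The missing ingredient is the $\ell_1$-sensitivity of $\{n_S(V)\}_V$} (item~3 of Theorem~\ref{thm:stablePartition}). Flipping $\sigma_i$ replaces one element in each of $S_1,S_2$, so
\[
\sum_V\bigl(g_V(\sigma)-g_V(\sigma^{(i)})\bigr)^2\le \max_V\bigl|g_V(\sigma)-g_V(\sigma^{(i)})\bigr|\cdot\sum_V\bigl|g_V(\sigma)-g_V(\sigma^{(i)})\bigr|\le 2\cdot 4=8 .
\]
The paper then proceeds as follows:
\begin{itemize}
\item bound $\sup_V|g_V|\le(\sum_V g_V^2)^{1/2}$ and apply Jensen;
\item use $\mathbb{E}_\sigma g_V^2=\mathrm{Var}_\sigma(g_V)$;
\item sum Efron--Stein over $V$ to get $\mathbb{E}\sum_V g_V^2\le 4m$.
\end{itemize}
This yields $\mathbb{E}\Phi\le 2\sqrt m$, with no dependence on $d$ or on how many subspaces occur. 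This $\ell_2$-in-$V$ control is exactly what your maximal-inequality step lacks. With it, your martingale viewpoint would also work, by treating $(g_V)_V$ as one Hilbert-space-valued martingale with orthogonal increments.

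\textbf{A minor point shared with the paper.} Fitting $\mathbb{E}\Phi=O(\sqrt m)$ into a budget of $2\sqrt{m\ln(1/\delta)}$ requires $\ln(1/\delta)$ to be bounded below. The paper's final step $\sqrt{4m}+\sqrt{m\ln(1/\delta)}\le 3\sqrt{m\ln(1/\delta)}$ tacitly uses $\delta\le 1/e$. For $\delta$ close to $1$ the stated bound fails in general: take $m=1$ and $\cD$ uniform on $e_1,\ldots,e_d$ with $d$ large.
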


To prove Lemma~\ref{lem:phiwhp}, we first employ McDiamid's inequality to prove that $\sup_{V} |n_{S_1}(V) - n_{S_2}(V)|$ is close w.h.p. (over the choices of $S_1,S_2$) to its expectation,
since $\sup_V |n_{S_1}(V) - n_{S_2}(V)|$ changes by at most $1$ when a single element of 
$S_1$ or $S_2$ is replaced, by the $\ell_\infty$-sensitivity of the stable partition. 
The main challenge lies in bounding the expectation 
$\mathbb{E}_{S_1, S_2}\left[\sup_V |n_{S_1}(V) - n_{S_2}(V)|\right]$, 
where the supremum ranges over the (possibly)  infinite collection of subspaces of $\mathbb{F}^d$. 
Moreover, unlike standard sensitivity analyses for differential privacy, which track 
the effect of a single coordinate change, here $S_1$ and $S_2$ are fully independent 
samples, so deviations can accumulate across all $m$ coordinates. To handle this, we 
use a symmetrization argument that reduces the problem to bounding a sum of variances 
of the differences $\set{n_{S_1}(V)-n_{S_2}(V)}_{V \subseteq \mathbb{F}^d}$. Then, we apply the 
Efron-Stein inequality that effectively bounds these variances via 
a bound on the $\ell_2$-sensitivity 
of the differences, which we derive by 
combining the $\ell_\infty$ and $\ell_1$ sensitivity guarantees of the stable partition.

Based on Lemma \ref{lem:phiwhp}, we can guarantee that the gaps are relatively small, and by sampling a threshold $t$ uniformly at random from $(0,T_{max}]$ for sufficiently large $T_{max}$, the probability of a uniformly drawn $t$ landing in the bounded gap between $n_{S_1}(V)$ and $n_{S_2}(V)$ becomes small, ensuring both executions agree on the candidate subspaces.
Finally, after isolating the candidate subspaces, we extract the one of maximal dimension, and show that due to the partition's nested structure, this guarantees coverage of a $1-\varepsilon$ fraction of the original input vectors.

\paragraph{From Replicable Linear Span to Parity Learning (Theorem~\ref{thm:intro:parities})
}

In realizable PAC learning of parities, given labeled examples $S = \set{(x_i,y_i)}_{i=1}^m \in (\bbF_2^d \times \bbF_2)^m$, it suffices to compute $w \in \bbF_2^d$ with $y_i = \ip{x_i,w}$ for most $i$'s (i.e., low empirical error) only when $S$ is realizable. The challenge is that replicability, by definition, should hold \emph{for any} distribution $\cD$ over $\bbF_2^d \times \bbF_2$, even for \emph{non-realizable} ones. To do that, we compute a replicable linear span $U^*$ of the $(d+1)$-dimensional points $S= \set{(x_i,y_i)}_{i=1}^m$, filter the points to $S_{U^*} = S \cap U^*$, and continue as follows: If $e_{d+1} = (0,\ldots,0,1) \in U^*$, we output $\bot$. Otherwise, we output a uniform vector in the set $C_{U^*}$ of all the $w$'s that satisfy $y_i = \ip{x_i,w}$ for every $(x_i,y_i) \in S_{U^*}$. 
In the analysis, we prove that $S_{U^*}$ is realizable iff $e_{d+1} \notin U^*$. Now, when $S$ is realizable, then $S_{U^*}$ is realizable, so the algorithm will output $w$ that consistent with all pairs in $S_{U^*}$ (which covers most of the examples in $S$). Regarding replicability, since $U^*$ is replicable, the decision of whether to output $\bot$ or not is also replicable. Given that the output is not $\bot$, then the resulting filtered datasets $S_{U^*}$ and $S_{U^*}'$ of two executions are both realizable, and span the same subspace $U^*$ with high probability. In that case, the sets of possible solutions $C_{U^*}$ and $C_{U^*}'$ must be identical, so the final output $w$ (using the same shared randomness) must also be identical.

\section{Related Work}\label{sec:related_work}

Most closely related to our work are the results of~\cite{impagliazzo2022reproducibility} and~\cite{kalavasis2024computational}, both of which include algorithms for replicably learning parities over restricted distributions. As mentioned in Section~\ref{sec:tech}, \cite{impagliazzo2022reproducibility} observes that over the uniform distribution there is a unique parity function that is consistent with the data, and so simply drawing sufficiently many samples and performing Gaussian elimination will give the same parity function with high probability over the choice of sample. Notably, this approach does not quite satisfy the definition of replicability, which is universally quantified over distributions, and Gaussian elimination may fail to be replicable on distributions far from uniform. \cite{kalavasis2024computational} gives an algorithm for lifting any replicable algorithm for the uniform distribution to general distributions, but with exponential running time in the decision tree complexity of the target distribution. So, for distributions where only a constant number of features determine the probability mass of an element, their algorithm will run in polynomial time, but may run in time exponential in $d$ for arbitrary distributions. 

In this work, we focus specifically on computational tractability of replicable learning in the supervised PAC learning setting, and adopt the strong notion of replicability introduced in~\citep{impagliazzo2022reproducibility} which requires that the algorithm returns precisely the same output with high probability over the choice of sample and shared randomness. However, replicability has been studied in other contexts,  including bandits~\citep{esfandiari2022replicable}, clustering~\citep{esfandiari2023replicable}, online learning~\citep{ahmadi2024replicable}, high-dimensional statistics~\citep{hopkins2024replicability,BBEGH26}, reinforcement learning~\citep{karbasi2023replicability,eaton2023replicable,eaton2025replicable,hopkins2025generative,zhang2025list}, active learning~\citep{hira2024cost}, and hypothesis testing~\citep{BBEGH26}. 

Related stability notions also abound in the literature. Replicability itself is a weakening of global stability~\citep{bun2020equivalence}, which also requires that an algorithm return precisely the same output with high probability over samples, but does not allow shared randomness between independent runs of the algorithm. Global stability is equivalent to the subsequently introduced notion of list replicability~\citep{chase2023stability}, which instead requires that for every distribution there exists a list of outputs such that with high probability over samples the learner returns on output from this list. Relaxations of replicability have also been recently studied. \cite{hopkins2025approximate} give two new relaxations of replicability which maintain the setup of comparing the outputs of two independent runs of the same algorithm on resampled data and shared internal randomness, but weaken the requirement of exact equality of outputs in the following ways:
\begin{itemize}
    \item pointwise replicability - output models must agree for a fixed input from the data domain, but not all data simultaneously.
    \item approximate replicability - output models must agree on a large fraction of the target distribution.
\end{itemize}
They also introduce semi-replicability, which is analogous to semi-private algorithms, in which the algorithm must return identical outputs on independent runs with shared randomness and resampled data as in replicability, but the two runs may be given shared unlabeled data in addition. They show that these relaxations overcome statistical barriers to strictly replicable learning, obtaining sample complexity guarantees for all that depend only on VC dimension, as opposed to Littlestone dimension, which is known to be required for replicable learning (via the statistical equivalence with privacy~\cite{ghazi2021user}).

\section{Preliminaries}  \label{sec:prelim}

\subsection{Notations}

We use the symbol $\bbF$ to denote a field, and 
denote by $\mathbb{F}_2$ the binary field over $\zo$ where arithmetic is performed
modulo $2$. Given two vectors $u = (u_1,\ldots,u_d) $ and $v = (v_1,\ldots,v_d)$ in $\mathbb{F}^d$, the inner product between $u$ and $v$ is $\ip{u,v} = \sum_{i=1}^d u_i v_i \in \bbF$.
Given a distribution $\cD$, we write $x \sim \cD$ to denote that $x$ is sampled according to $\cD$, and write $\cD(x) = Pr_{x' \sim \cD}[x' = x]$.
For a set $\cT$, we write $x \la \cT$ to denote that $x$ is sampled uniformly at random from $\cT$. All logarithms considered here are natural logarithms (i.e., in base $e$).

\subsection{PAC Learning}

\begin{definition}[(Realizable) PAC Learnability, see e.g., \cite{shalev2014understanding}]\label{def:PAC}
A hypothesis class $\mathcal{H} = \{h \colon \mathcal{X} \to \{0,1\}\}$ is \emph{PAC learnable} if there exist a function
$m_{\mathcal{H}} : (0,1)^2 \to \mathbb{N}$
and a learning algorithm $\mathcal{A}$ such that the following holds: for every $\varepsilon, \delta \in (0,1)$, and for every distribution $\mathcal{D}$ over $\mathcal{X} \times \{0,1\}$ that is realizable with respect to $\mathcal{H}$ (i.e., there exists $f \in \mathcal{H}$ such that the support of $\mathcal{D}$ only contains pairs of the form $(x, f(x))$), when $\mathcal{A}$ is given $m \ge m_{\mathcal{H}}(\varepsilon, \delta)$
i.i.d. examples drawn from $\mathcal{D}$, the algorithm outputs a hypothesis $h \in \mathcal{H}$ such that
\[
\underset{\substack{S \sim \mathcal{D}^m\\ h \sim \mathcal{A}(S)}}{\Pr}\big[ L_{\mathcal{D}}(h) \le \varepsilon \big] \ge 1 - \delta,
\]
where $L_{\mathcal{D}}(h) := \Pr_{(x,y) \sim \mathcal{D}}[h(x) \ne y]$ denotes the generalization error of $h$.

\end{definition}

\remove{
\begin{definition}[PAC Learnability, see e.g., \cite{shalev2014understanding}]
A hypothesis class $\mathcal{H}$ over a domain $\mathcal{X}$ is \emph{PAC learnable} if there exist a function
$m_{\mathcal{H}} : (0,1)^2 \to \mathbb{N}$
and a learning algorithm $\mathcal{A}$ such that the following holds: for every $\varepsilon, \delta \in (0,1)$, for every distribution $\mathcal{D}$ over $\mathcal{X}$, and for every labeling function $f : \mathcal{X} \to \{0,1\}$ satisfying the realizability assumption with respect to $\mathcal{H}$, when $\mathcal{A}$ is given
$m \ge m_{\mathcal{H}}(\varepsilon, \delta)$
i.i.d. examples drawn from $\mathcal{D}$ and labeled by $f$, the algorithm outputs a hypothesis $h \in \mathcal{H}$ such that
\[
\Pr_{\mathbf{S} \sim \mathcal{D}^m}\big[ L_{\mathcal{D}}(h) \le \varepsilon \big] \ge 1 
\]

where $L_{\mathcal{D}}(h) = \Pr_{\mathbf{x} \sim \mathcal{D}}[h(\mathbf{x}) \ne f(\mathbf{x})]$ denotes the generalization error of $h$.
\end{definition}
}

In this work, we focus on the class of parity functions.

\begin{definition}\label{def:parity}
    The class of parity functions over $\bbF_2^d$ is defined by
    \begin{align}
    \cH_{PARITY}^d := \set{f_z\colon \bbF_2^d \rightarrow \bbF_2 \:\: \colon \:\: z \in \bbF_2^d}\:\:\:\text{ for }\:\:\:f_z(x) := \langle z, x \rangle.
    \end{align}
\end{definition}

\subsection{Replicability}

Replicability is a stability property of randomized algorithms, requiring that two executions of the algorithm on independent samples drawn from the same distribution produce the same output with high probability, provided the internal randomness is identical.

\begin{definition}[Replicability \cite{impagliazzo2022reproducibility}]\label{def:replicability}
A randomized algorithm $\mathcal{A} : \mathcal{X}^n \to \mathcal{Y}$ is $\rho$-replicable if for every distribution $\mathcal{D}$ over $\mathcal{X}$, we have
\[
\Pr_{\mathbf{S_1},\mathbf{S_2},\mathbf{r}}[\mathcal{A}(\mathbf{S_1;\mathbf{r}}) = \mathcal{A}(\mathbf{S_2}; \mathbf{r})] \ge 1 - \rho,
\]
where $\mathbf{S_1}, \mathbf{S_2} \in \mathcal{X}^n$ are independent sequences of i.i.d.\ samples from $\mathcal{D}$, and $\mathbf{r}$ represents the coin tosses of the algorithm $\mathcal{A}$.
\end{definition}

\remove{
\begin{lemma}[Replicability Implies Generalization \cite{impagliazzo2022reproducibility}] \label{lm:rep_gen}

Let $\bS \sim \mathcal{D}^m$, let $\cA$ be a $\rho$-replicable learning algorithm, and let $\bh \sim \mathcal{A}(\bS)$ be its output on $\bS$. Then, for every $\delta > 0$, with probability at least $1 - \rho - \delta$, 
\[
L_{\mathcal{D}}(\bh) \leq L_{\bS}(\bh) + \sqrt{\frac{\ln(1/\delta)}{2m}}.
\]
\end{lemma}
}

\remove{
In the context of realizable PAC learning, it is usually common (and simpler) to provide replicability only with respect to realizable distributions $\cD$ over labeled inputs (and not for any distribution). Yet, achieving standard replicability is reduced to the above relaxed replicability requirement, as stated below.

\begin{definition}\label{def:realizable-dist}
    Let $\cH = \set{h \colon \cX \to \zo}$ be a concept class. 
    We say that an algorithm $\cA \colon (\cX \times \zo)^m \to \cY$ is $\rho$-replicable w.r.t.\ $\cH$-realizable distributions if the replicability guarantee (Definition~\ref{def:replicability}) only holds for distributions $\cD$ that are realizable with respect to $\cH$ (and not for any distribution $\cD$ over $(\cX \times \zo)^m$). 
\end{definition}

\begin{proposition}\label{prop:make-replicable}
    Let $\cH = \set{h \colon \cX \to \zo}$ be a concept class, and let $\cA \colon (\cX \times \zo)^m \rightarrow \cY$ be a polynomial-time $\rho$-replicable algorithm w.r.t. $\cH$-realizable distributions, and a PAC learner for the class $\cH$, with sample complexity $m = m(\varepsilon,\delta,\rho)$. Let $\delta' =O (\frac{\delta \rho^2 \log 1/\rho}{\log{1/\delta}})$. 
    Then, there exists a polynomial time algorithm $\cA' \colon (\cX \times \zo)^{m'} \rightarrow \cY$ with sample complexity $m'(\varepsilon,\delta,\rho) = O\left(\frac{1}{\rho^2}\cdot m(\varepsilon,\delta',\rho)\right)$, 
    that is:
    \begin{enumerate}
        \item $\rho$-replicable for \emph{\textbf{any}} distribution.
        \item For any $\cH$-realizable distribution,  $\mathcal{A}'$ PAC learns $\cH$ with error $\eps$ and failure probability $O(\delta)$. 
    \end{enumerate}
\end{proposition}

}

\subsection{Differential Privacy}

\begin{definition}[$\varepsilon$-differential privacy, \cite{dwork2006calibrating}]\label{def:privacy}
A randomized algorithm $\mathcal{A}$ is $\varepsilon$-differentially private if for all neighboring databases $S, S'$, and for all sets $\mathcal{Y}$ of outputs,

$$\Pr[\mathcal{A}(S) \in \mathcal{Y}] \le \exp(\varepsilon) \cdot \Pr[\mathcal{A}(S') \in \mathcal{Y}].$$

The probability is taken over the random coins of $\mathcal{A}$.
\end{definition}

\subsection{Statistical Query  Learning}

In the statistical query (SQ) model, algorithms access statistical properties of a distribution
rather than individual examples.

\begin{definition}[Statistical query oracle]
Let $\tau \in [0, 1]$ and $\phi : \mathcal{X} \rightarrow [0, 1]$ be a query. Let $\mathcal{D}$ be a distribution over domain $\mathcal{X}$. A \emph{statistical query oracle} for $\mathcal{D}$, denoted $\mathcal{O}_{\mathcal{D}}(\tau, \phi)$, takes as input a tolerance parameter $\tau$ and a query $\phi$, and outputs a value $v$ such that $|v - \mathbb{E}_{x \sim \mathcal{D}}[\phi(x)]| \le \tau$.
\end{definition}

\subsection{Main Tools}

\subsubsection{Inequalities}

\begin{theorem}[McDiarmid's Inequality \cite{mcdiarmid1989method}]\label{McDiarmid}
Let $f : \mathcal{Z}^m \to \mathbb{R}$ be a measurable function such that there exist constants
$c_1,\dots,c_m$ satisfying
\[
\sup_{S \in \mathcal{Z}^m,\; z_i' \in \mathcal{Z}}
\big| f(S) - f(S^{(i)}) \big| \le c_i,
\]
where $S^{(i)}$ denotes the vector obtained from $S$ by replacing its $i$-th coordinate with $z_i'$. Then for every 
collection of independent random variables $\bS = (\mathbf{z_1},\ldots,\mathbf{z_m}) \in \mathcal{Z}^m$ and every $\lambda > 0$,
\[
\Pr\big[ f(\bS) - \mathbb{E}[f(\bS)] \ge \lambda \big], \:\: \Pr\big[ f(\bS) - \mathbb{E}[f(\bS)] \le -\lambda \big]
\: \le \:
\exp\!\left(
-\frac{2\lambda^2}{\sum_{i=1}^m c_i^2}
\right).
\]

\end{theorem}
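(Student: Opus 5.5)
We prove this classical bound by the martingale method: write $f(\bS)-\mathbb{E}[f(\bS)]$ as a sum of bounded martingale differences and apply Chernoff's exponential-moment technique with Hoeffding's lemma to each increment. Only the upper tail needs proof. The lower tail follows by applying the same argument to $-f$, which satisfies the same bounded-differences condition with the same constants $c_i$.

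\textbf{Step 1 (Doob martingale).} Let $\cal F_i=\sigma(\mathbf{z_1},\ldots,\mathbf{z_i})$, and define $Z_i=\mathbb{E}[f(\bS)\mid \cal F_i]$ for $i=0,\ldots,m$. Then $Z_0=\mathbb{E}[f(\bS)]$ and $Z_m=f(\bS)$. Set $D_i=Z_i-Z_{i-1}$, so that $f(\bS)-\mathbb{E}[f(\bS)]=\sum_{i=1}^m D_i$ and $\mathbb{E}[D_i\mid\cal F_{i-1}]=0$.

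\textbf{Step 2 (conditional range of $D_i$).} This is the main point, and it is where independence is used. Because the coordinates are independent, we can write
\[
Z_i=g_i(\mathbf{z_1},\ldots,\mathbf{z_i}),\qquad g_i(z_1,\ldots,z_i)=\mathbb{E}\bigl[f(z_1,\ldots,z_i,\mathbf{z_{i+1}},\ldots,\mathbf{z_m})\bigr],
\]
where the expectation is over the last $m-i$ coordinates only, whose law does not depend on $z_1,\ldots,z_i$. Now fix $z_1,\ldots,z_{i-1}$ and set
\[
L_i=\inf_{z}g_i(z_1,\ldots,z_{i-1},z)-Z_{i-1},\qquad U_i=\sup_{z}g_i(z_1,\ldots,z_{i-1},z)-Z_{i-1}.
\]
Both are $\cal F_{i-1}$-measurable and $L_i\le D_i\le U_i$. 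For any $z,z'$, the quantity $g_i(\ldots,z)-g_i(\ldots,z')$ is an average, over the same remaining coordinates, of $f(S)-f(S^{(i)})$. By the hypothesis it is therefore at most $c_i$, which gives $U_i-L_i\le c_i$. (Some care with measurability of the sup and inf is needed; in the general case we can work with essential sup and inf.)

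\textbf{Step 3 (Hoeffding's lemma and chaining).} Hoeffding's lemma states that a mean-zero variable supported on an interval of length $c$ has moment generating function at most $e^{s^2c^2/8}$. Applied conditionally on $\cal F_{i-1}$, it gives
\[
\mathbb{E}[e^{sD_i}\mid\cal F_{i-1}]\le e^{s^2c_i^2/8}\quad\text{for every } s>0.
\]
We then peel off the terms by the tower property, from $i=m$ down to $i=1$, to obtain
\[
\mathbb{E}\bigl[e^{s\sum_i D_i}\bigr]\le \exp\Bigl(\tfrac{s^2}{8}\sum_{i=1}^m c_i^2\Bigr).
\]
If Hoeffding's lemma is to be proved rather than quoted, the standard route is convexity of $x\mapsto e^{sx}$ on the interval followed by a second-order Taylor bound on the resulting log-MGF.

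\textbf{Step 4 (Chernoff bound).} By Markov's inequality,
\[
\Pr\Bigl[\sum_i D_i\ge\lambda\Bigr]\le \exp\Bigl(-s\lambda+\tfrac{s^2}{8}\sum_{i=1}^m c_i^2\Bigr).
\]
Choosing $s=4\lambda/\sum_i c_i^2$ makes the exponent $-2\lambda^2/\sum_i c_i^2$, which is the claimed bound.

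The only delicate step is Step 2. The bounded-differences hypothesis controls the change in $f$ under a single-coordinate swap, and we need that to control the conditional range of a martingale increment. The argument works precisely because, by independence, $g_i$ averages over a distribution of the remaining coordinates that does not depend on the $i$-th one.
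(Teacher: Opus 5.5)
The paper gives no proof of this statement. It quotes it from \cite{mcdiarmid1989method} as a black-box tool and invokes it only in Claim~\ref{clm:concentration_phi}, with $2m$ coordinates and every $c_i=1$.

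Your argument is the standard bounded-differences martingale proof, essentially McDiarmid's own, and it is correct. Each step checks out: the Doob decomposition, the use of independence to write $Z_i=g_i(\mathbf{z_1},\ldots,\mathbf{z_i})$ so that the conditional range of $D_i$ has length at most $c_i$, conditional Hoeffding with tower-property chaining, and the choice $s=4\lambda/\sum_i c_i^2$ giving exponent $-2\lambda^2/\sum_i c_i^2$. It also handles non-identically distributed independent coordinates, as the statement requires.

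Three minor remarks:
\begin{itemize}
\item Integrability of $f(\bS)$ is automatic. Chaining the hypothesis over coordinates gives $|f(S)-f(S')|\le\sum_i c_i$ for all $S,S'$, so $f$ is bounded.
\item The measurability concern in Step~2 can be avoided without essential suprema. Fix the prefix $(z_1,\ldots,z_{i-1})$ and apply ordinary Hoeffding's lemma to $z\mapsto g_i(z_1,\ldots,z_{i-1},z)-g_{i-1}(z_1,\ldots,z_{i-1})$ under the law of $\mathbf{z_i}$. This function has mean zero and takes values in an interval of length at most $c_i$. Then integrate out the prefix by Fubini.
\item Your choice of $s$ needs $\sum_i c_i^2>0$. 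The case $\sum_i c_i^2=0$ is trivial, since $f$ is then constant.
\end{itemize}
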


\begin{theorem}[Efron-Stein Inequality \cite{efron1981jackknife}]
Let $f : \mathcal{Z}^m \to \mathbb{R}$ be a measurable function. Let $\mathbf{S} = (\mathbf{z}_1, \dots, \mathbf{z}_m) \in \mathcal{Z}^m$ be a collection of independent random variables. Let $\mathbf{S}^{(i)}$ denote the vector obtained from $\mathbf{S}$ by replacing its $i$-th coordinate with $\mathbf{z}_i'$, with $\mathbf{z}_i,\mathbf{z}_i' $ having the same distribution. Then, the variance of $f(\mathbf{S})$ satisfies:
\[
    \mathrm{Var}[f(\mathbf{S})] \le \frac{1}{2} \sum_{i=1}^m \mathbb{E} \left[ \left( f(\mathbf{S}) - f(\mathbf{S}^{(i)}) \right)^2 \right],
\]
\end{theorem}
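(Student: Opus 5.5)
The plan is the standard martingale (Doob) decomposition argument. We may assume $\mathbb{E}[f(\mathbf{S})^2]<\infty$: if the right-hand side is finite then $f$ has finite second moment (one can reduce to this by truncation and monotone convergence), and otherwise there is nothing to prove. Write $\mathbb{E}_i[\cdot] := \mathbb{E}[\cdot \mid \mathbf{z}_1,\ldots,\mathbf{z}_i]$, with $\mathbb{E}_0 = \mathbb{E}$ and $\mathbb{E}_m f = f(\mathbf{S})$. Define the martingale differences $\Delta_i := \mathbb{E}_i f - \mathbb{E}_{i-1} f$, so that $f(\mathbf{S}) - \mathbb{E} f(\mathbf{S}) = \sum_{i=1}^m \Delta_i$.

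\textbf{Step 1 (orthogonality).} For $i<j$, $\Delta_i$ is measurable with respect to $\mathbf{z}_1,\ldots,\mathbf{z}_i$, and $\mathbb{E}_{j-1}\Delta_j = 0$. Hence $\mathbb{E}[\Delta_i\Delta_j]=0$, and therefore
\[
\mathrm{Var}[f(\mathbf{S})] = \sum_{i=1}^m \mathbb{E}[\Delta_i^2].
\]

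\textbf{Step 2 (bounding each increment).} Let $\mathbb{E}^{(i)}$ denote integration over $\mathbf{z}_i$ alone, with the other coordinates held fixed. By independence of the coordinates, $\mathbb{E}_{i-1} f = \mathbb{E}_i\big[\mathbb{E}^{(i)} f\big]$. Indeed, integrating out $\mathbf{z}_i,\ldots,\mathbf{z}_m$ can be done in either order by Fubini. This gives
\[
\Delta_i = \mathbb{E}_i\big[f - \mathbb{E}^{(i)} f\big].
\]
By conditional Jensen, $\Delta_i^2 \le \mathbb{E}_i\big[(f-\mathbb{E}^{(i)}f)^2\big]$. Taking expectations,
\[
\mathbb{E}[\Delta_i^2] \le \mathbb{E}\big[\mathrm{Var}^{(i)} f\big],
\]
where $\mathrm{Var}^{(i)}$ is the variance over $\mathbf{z}_i$ with the other coordinates fixed. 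This is the only step needing care, and I expect it to be the main obstacle. The key point is that independence makes $\mathbb{E}^{(i)}$ commute correctly with $\mathbb{E}_i$; without independence the identity $\mathbb{E}_{i-1} f = \mathbb{E}_i[\mathbb{E}^{(i)} f]$ fails.

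\textbf{Step 3 (symmetrization).} Conditioned on all coordinates other than the $i$-th, $\mathbf{z}_i$ and $\mathbf{z}_i'$ are i.i.d. The identity $\mathrm{Var}[X] = \tfrac12\mathbb{E}[(X-X')^2]$ for an independent copy $X'$ then gives
\[
\mathrm{Var}^{(i)} f = \tfrac12\,\mathbb{E}^{(i)}\big[(f(\mathbf{S})-f(\mathbf{S}^{(i)}))^2\big].
\]
Taking full expectations and summing over $i$ with Step 1 yields
\[
\mathrm{Var}[f(\mathbf{S})] \le \frac12\sum_{i=1}^m \mathbb{E}\big[(f(\mathbf{S})-f(\mathbf{S}^{(i)}))^2\big],
\]
which is the claimed inequality.
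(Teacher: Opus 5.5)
Your proof is correct: it is the textbook Doob-martingale argument. The three steps are orthogonality of the increments $\Delta_i$; the identity $\Delta_i=\mathbb{E}_i[f-\mathbb{E}^{(i)}f]$, which uses independence via Fubini, followed by conditional Jensen; and the symmetrization $\mathrm{Var}^{(i)}f=\tfrac12\mathbb{E}^{(i)}[(f(\mathbf{S})-f(\mathbf{S}^{(i)}))^2]$. The truncation reduction you sketch also goes through, since clipping is $1$-Lipschitz and so can only shrink the right-hand side. The paper itself gives no proof of this statement. It imports the inequality from Efron and Stein as a black-box tool, so there is no competing route to compare against.

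Your reading is better than the paper's wording on one point. Your Step~3 uses that $\mathbf{z}_i'$ is independent of $\mathbf{S}$, whereas the statement only says that $\mathbf{z}_i$ and $\mathbf{z}_i'$ have the same distribution. Independence is genuinely needed: with $\mathbf{z}_i'=\mathbf{z}_i$ the right-hand side vanishes. So you were right to assume it.

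The paper's only application of the inequality is Claim~\ref{clm:expectation_phi}, and it differs from your setting in two harmless ways:
\begin{itemize}
\item It works on the finite cube $\{-1,1\}^m$, so no integrability issue arises.
\item It uses a deterministic flip of $\sigma_i$ rather than an independent resample. A resampled fair coin equals the flipped one with probability $1/2$ and the original otherwise. Your version therefore gives $\mathrm{Var}[g_V(\sigma)]\le\tfrac14\sum_i\mathbb{E}[(g_V(\sigma)-g_V(\sigma^{(i)}))^2]$ for the flipped $\sigma^{(i)}$. This is a factor $2$ stronger than the form the paper invokes.
\end{itemize}
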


\subsubsection{Stable Partition} \label{subsec:kmm}

Given a sequence of vectors in $\mathbb{F}^d$ (for an arbitrary field $\bbF$), the Stable Partition algorithm iteratively constructs (in a greedy manner) a collection of sets that partition the input, where each set is linearly independent.
Importantly, this procedure enjoys strong stability properties.

\begin{theorem}[Stable Partition, Algorithm~1 in \cite{kaplan2025differentially}]\label{thm:stablePartition}
    Consider the algorithm that given input $S = \{v_1, \dots ,v_m\}$ where $v_i\in\mathbb{F}^d$, the algorithm  partitions $S$ into 
\[
\mathcal{P}_S = \{A_1, A_2, \dots, A_M\},
\]
such that each set $A_j$ is linearly independent. 
    Then the partition satisfies the following key properties:

\begin{enumerate}
\item \textbf{Nested Subspaces.}  
There exist at most $d$ subspaces
\[
V_1 \supseteq V_2 \supseteq \cdots \supseteq V_\ell,
\quad \ell \le d,
\]
such that for every $j$, $\mathrm{span}(A_j)$ is equal to one of these subspaces. 

\item \textbf{Low $\ell_\infty$-Sensitivity.}  
We define the  multiplicity function over subspaces $V \subseteq \mathbb{F}^d$, 
\[
n_S(V) :=\big|\{ A_j\in \mathcal{P}_{S} : \mathrm{span}(A_j) = V \}\big|.
\]
Then, $\set{n_S(V)}_V$ has $\ell_\infty$-sensitivity at most $1$, meaning that for any neighboring input sequences $S,S'$ differing in a single vector,
\[
\max_V |n_S(V) - n_{S'}(V)| \le 1,
\]

\item \textbf{Low $\ell_1$-Sensitivity.} 

$\set{n_S(V)}_V$ has $\ell_1$-sensitivity at most $2$, meaning that for any neighboring input sequences $S,S'$ differing in a single vector,
\[
\sum_V |n_S(V) - n_{S'}(V)| \le 2,
\]
\end{enumerate}

\end{theorem}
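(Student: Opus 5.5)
We work with the greedy procedure of \cite{kaplan2025differentially}: with $R_1=S$, pass $j$ scans the remaining sequence $R_j$ in order and adds a vector to $A_j$ iff it is linearly independent of the vectors already in $A_j$. It then sets $R_{j+1}=R_j\setminus A_j$. Zero vectors are never selected, so we set them aside from the start; they do not affect any span.

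\textbf{Nested subspaces.} The greedy scan produces a maximal independent subset of $R_j$. Hence $\mathrm{span}(A_j)=\mathrm{span}(R_j)$. Since $R_{j+1}\subseteq R_j$, we get $\mathrm{span}(A_{j+1})\subseteq \mathrm{span}(A_j)$, so the spans form a chain $V_1\supseteq V_2\supseteq\cdots$. A strictly decreasing chain of nonzero subspaces of $\bbF^d$ has at most $d$ distinct members, which gives $\ell\le d$. The same argument shows that $j\mapsto \dim\mathrm{span}(A_j)$ is nonincreasing. It also shows that $n_S$ is determined by the counts $c_k(S):=|\{j:\dim \mathrm{span}(A_j)\ge k\}|$ together with the chain itself.

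\textbf{Sensitivity.} Write a replacement of $v_i$ by $v_i'$ as a deletion followed by an insertion at the same position. The core step is a one-element monotonicity lemma. If $S^-$ is $S$ with one vector removed, then for every $j$ we have $\mathrm{span}(R_j(S^-))\subseteq \mathrm{span}(R_j(S))$, and also $\mathrm{span}(R_{j+1}(S))\subseteq \mathrm{span}(R_j(S^-))$. So the chain for $S^-$ interleaves the chain for $S$ with a shift of at most one pass. The proof is by induction on $j$, using the matroid exchange property. The invariant is that $R_j(S^-)$ and $R_j(S)\setminus\{\text{one vector}\}$ agree up to a single ``displaced'' element. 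This displaced element is the vector that got pushed into a later pass because the deleted vector no longer blocks or fills its slot, and it either stays in the same span level or moves one level down.

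From the interleaving, the multiset of spans changes by moving a single copy of some subspace $V_a$ to a smaller subspace $V_b$ of the same chain, or by dropping a single copy. This gives $\ell_1$-change at most $2$ and $\ell_\infty$-change at most $1$ for a deletion.

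\textbf{Main obstacle.} The hard part is replacement: composing two such moves naively gives only $\ell_1\le 4$ and $\ell_\infty\le 2$. To get the stated constants, I would apply the interleaving lemma directly to the pair $(S,S')$ with the common core $S\setminus\{v_i\}$ fixed. The goal is to show that the two chains sandwich the chain of the core from above by at most one step each. Then the effect of removing $v_i$ and the effect of inserting $v_i'$ act on the same ``free slot'' in the core's partition, so they can only cancel or coincide. This would give $\sum_V|n_S(V)-n_{S'}(V)|\le 2$ and $\max_V|n_S(V)-n_{S'}(V)|\le 1$. Making this slot-sharing argument precise is where I expect the real work. My first attempt would be a potential argument on the vector $(c_k)_{k\le d}$: show that each $c_k$ changes by at most one, and that at most one $k$-interval is affected.
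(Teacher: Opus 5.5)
The paper gives no proof of this theorem; it imports it from \cite{kaplan2025differentially}. The genuine gap in your proposal is the replacement step. It cannot be closed, because for neighbors that differ by \emph{replacing} one entry, the stated constants are false.

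Take $S=(e_1,e_2,e_3,e_3)$ and $S'=(e_2,e_2,e_3,e_3)$, which differ only in the first entry. The stable partition of $S$ is $\{e_1,e_2,e_3\},\{e_3\}$, and that of $S'$ is $\{e_2,e_3\},\{e_2,e_3\}$. Hence $|n_S(V)-n_{S'}(V)|=2$ for $V=\mathrm{span}(e_2,e_3)$, and $\sum_V|n_S(V)-n_{S'}(V)|=1+1+2=4$. Repeated entries are the normal situation for i.i.d.\ samples; over $\mathbb{R}$ you can also use $2e_2,2e_3$ for the second copies.

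In your deletion-then-insertion decomposition, deleting $e_1$ moves level $1$ down to $\mathrm{span}(e_2,e_3)$. Inserting $e_2$ then moves level $2$ up from $\mathrm{span}(e_3)$ to $\mathrm{span}(e_2,e_3)$. The two moves pile onto the same subspace instead of cancelling. Your potential does behave coordinatewise as you hoped: $c_2$ rises by one and $c_3$ falls by one. But two different values of $k$ move and the chain itself changes, so ``at most one $k$-interval is affected'' fails. No slot-sharing argument can give the constants $1$ and $2$; the naive composition bounds $\ell_\infty\le 2$ and $\ell_1\le 4$ are attained.

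What you can prove is the add/remove version, and your deletion analysis essentially does it. It can be made exact. By the one-exchange property of the greedy basis, $R_j(S^-)=R_j(S)\setminus\{x_j\}$ for a single element $x_j$, where:
\begin{itemize}
\item $x_{j+1}=x_j$ if $x_j\notin A_j(S)$;
\item $x_{j+1}$ is the exchange element if $x_j\in A_j(S)$ and one exists;
\item $R_{j+1}(S^-)=R_{j+1}(S)$ once $x_j$ is a coloop of $R_j(S)$.
\end{itemize}
So $\mathrm{span}(R_j(S^-))\subseteq\mathrm{span}(R_j(S))$ for every $j$.

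Interleaving alone does not yield ``only one level changes.'' The missing ingredient is a dimension count: $\sum_j\dim\mathrm{span}(A_j)$ equals the number of nonzero entries, which drops by exactly one under a deletion. Together with the levelwise inclusion, this forces exactly one level to shrink to a hyperplane of itself. That hyperplane need not belong to the old chain (e.g.\ $S=(e_1,e_2)$), which is harmless for counting. This gives $\ell_\infty\le 1$ and $\ell_1\le 2$ for insertions and deletions, hence $2$ and $4$ for replacements.

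Downstream, the paper applies the theorem to replacements, in McDiarmid's bounded differences and in the Efron--Stein step behind Lemma~\ref{lem:phiwhp}. With $2$ and $4$ there, only the constants change.
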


For completeness, a full description of the algorithm is provided in Appendix  \ref{sec:stable_partition}.

\section{Replicable Linear Span} \label{sec:linspan}

 In this section, we present our replicable linear span algorithm, $\RepLinearSpan$. Given a sequence of $m$ vectors $S = (v_1, \dots, v_m) \in (\mathbb{F}^d)^m$, the algorithm operates by first splitting the sequence into linearly independent sets and then filtering them to identify a replicable set of "heavy" subspaces. Finally, it outputs a subspace of maximal dimension, $V^* \subseteq Span(S)$, that covers almost all input vectors. For the initial decomposition step, our algorithm utilizes the stable partition algorithm (Theorem~\ref{thm:stablePartition}) strictly as a subroutine.

\bigskip

\begin{algorithm}[H]
\caption{$\RepLinearSpan$ (Replicable Linear Span)}
\label{alg:linspan}
\begin{algorithmic}[1]

\REQUIRE  $S=\{v_i\}_{i=1}^m \in (\bbF^d)^m$.
\ENSURE A replicable subspace $V^* \subseteq Span(S)$.

\STATE 
Run the algorithm from Theorem~\ref{thm:stablePartition} to obtain partition 
$\mathcal{P}_S = \{A_1,\dots,A_M \}$.

\STATE For each subspace $V$ spanned by a set in the partition $\mathcal P_S$, define  
\[
n_{S}(V) := |\{ j : \mathrm{span}(A_j) = V \}|.
\]

\STATE   
Pick $t \la (0, T_{\max}]$ uniformly at random (the threshold $ T_{\max}$ will be determined later by the analysis).

\STATE Define the heavy subspaces  \label{step:Pheavy}
$
\mathcal{P}_{\text{heavy}} := \{ V : n_{S}(V) \geq t  \}.
$

\STATE Let  $V^* := \arg\max_{V \in \mathcal{P}_{\text{heavy}}} \dim(V).$

\STATE \textbf{Return} $V^*$
\end{algorithmic}
\end{algorithm}

\begin{theorem}[Restatement of Theorem~\ref{thm:intro:linearspan}]\label{thm:linspan}
Let $\varepsilon, \rho \in [0,1]$. There exists a choice of $T_{\max}$ for $\RepLinearSpan$ (Step~3 in Algorithm~1)
such that $\RepLinearSpan$, for datasets of size 
\begin{align*}
m = O\Big( \frac{d^6}{\rho^2 \varepsilon^2} \log \left(1/\rho \right)\Big),
\end{align*}
is $\rho$-replicable (Definition~\ref{def:replicability}),  and for every $S = (v_1,\ldots,v_m) \in (\mathbb{F}^d)^m$, 
the output $\mathbf{V^*} = \RepLinearSpan(S)$ satisfies with probability $1$: 
\begin{enumerate}
    \item $\mathbf{V^*}\subseteq\mathrm{span}(S)$
    
    \item  
    $\frac{|S \backslash \mathbf{V^*}|}{m} \leq \varepsilon$.
\end{enumerate}
Furthermore, the running time of $\RepLinearSpan$ is bounded by $O(m^2d^3)$.
\end{theorem}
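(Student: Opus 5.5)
The proof has three parts: the two deterministic guarantees (containment and coverage), then replicability via Lemma~\ref{lem:phiwhp}, then the parameter choice and running time. The first two parts hold for every choice of $t$, provided $T_{\max}$ is small enough that $\mathcal{P}_{\text{heavy}}$ is never empty.

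\textbf{Containment.} Every $V \in \mathcal{P}_{\text{heavy}}$ equals $\mathrm{span}(A_j)$ for some $A_j \subseteq S$, so $V^* \subseteq \mathrm{span}(S)$.

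\textbf{Coverage.} Use the nested structure from Theorem~\ref{thm:stablePartition}: the spans of the parts form a chain $V_1 \supseteq \cdots \supseteq V_\ell$ with $\ell \le d$.
\begin{itemize}
\item Since the subspaces are nested, the heavy subspace of maximal dimension contains every heavy subspace. So every vector lying in a part whose span is heavy lies in $V^*$.
\item Every vector outside $V^*$ therefore belongs to some part whose span $V$ is light, i.e.\ $n_S(V) < t \le T_{\max}$.
\item There are at most $d$ distinct spans, each light one has fewer than $T_{\max}$ parts, and each part has at most $d$ vectors. Hence $|S \setminus V^*| \le d^2 T_{\max}$.
\end{itemize}
Setting $T_{\max} = \varepsilon m / d^2$ gives the $\varepsilon$ fraction. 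Nonemptiness also holds: the $M$ parts have sizes at most $d$, so $M \ge m/d$. With at most $d$ distinct spans, some $V$ has $n_S(V) \ge m/d^2 \ge T_{\max}$ whenever $\varepsilon \le 1$.

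\textbf{Replicability.}
\begin{itemize}
\item Let $\Delta := 3\sqrt{m\ln(2/\rho)}$. By Lemma~\ref{lem:phiwhp} with $\delta = \rho/2$, with probability at least $1-\rho/2$ we have $|n_{S_1}(V) - n_{S_2}(V)| < \Delta$ simultaneously for all $V$.
\item Condition on this event. The two runs produce the same $\mathcal{P}_{\text{heavy}}$ unless $t$ falls between $n_{S_1}(V)$ and $n_{S_2}(V)$ for some $V$ with nonzero multiplicity in at least one sample.
\item At most $2d$ such $V$ exist, namely the chains of the two runs. For each, the bad interval has length less than $\Delta$, so the probability over $t$ is at most $\Delta/T_{\max}$.
\item A union bound gives failure probability at most $2d\Delta/T_{\max} = 2d^3 \Delta/(\varepsilon m)$.
\item Identical heavy sets give identical argmax outputs. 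Dimensions within the chain are distinct, so the argmax is well defined.
\end{itemize}
We require $6 d^3\sqrt{m\ln(2/\rho)}/(\varepsilon m) \le \rho/2$. This holds for $m = O\big(d^6\log(1/\rho)/(\rho^2\varepsilon^2)\big)$, matching the stated bound. The main obstacle, bounding the gap uniformly over all subspaces, is already handled by Lemma~\ref{lem:phiwhp}. The remaining delicate point is that the union bound must range only over the $O(d)$ subspaces that actually appear, which the nested-chain property guarantees.

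\textbf{Running time.} The stable partition performs at most $m$ greedy passes. Each pass does $O(m)$ independence tests via incremental Gaussian elimination at cost $O(d^2)$, or $O(d^3)$ with naive rank updates, giving $O(m^2 d^3)$. Computing spans, multiplicities and the argmax is dominated by this cost.
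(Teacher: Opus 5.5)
Your proof is correct and follows the paper's argument. It uses the same nested-chain/pigeonhole reasoning for nonemptiness and the $d^2T_{\max}$ coverage bound, and the same application of Lemma~\ref{lem:phiwhp} with $\delta=\rho/2$ plus a union bound over the at most $2d$ subspaces with nonzero multiplicity. The only difference is cosmetic: you fix $T_{\max}=\varepsilon m/d^2$, so coverage is tight, and derive $m$ from the replicability requirement. The paper instead fixes $T_{\max}=\frac{12d}{\rho}\sqrt{m\log(2/\rho)}$, so replicability is tight, and derives $m$ from coverage. Both lead to $m\ge 144d^6\log(2/\rho)/(\rho^2\varepsilon^2)$, and your choice makes the nonemptiness condition $T_{\max}\le m/d^2$ immediate.
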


The proof of Theorem~\ref{thm:linspan} appears in Section~\ref{sec:linear_span_proof}.

\subsection{Direct Applications}\label{sec:general-vec-spaces:applicaitons}
 We observe that the tasks of \emph{synthetic linear equations}, \emph{synthetic affine spans}, and \emph{PAC learning of subspaces} (Section~3 in \cite{kaplan2025differentially}) can all be realized as direct applications of the linear span problem (Theorem~\ref{thm:linspan}). As shown in \cite{kaplan2025differentially}, the solutions to these tasks can be derived by post-processing the output of a private linear span algorithm.

Like differential privacy, replicability is preserved under post-processing. Therefore, by applying the corresponding transformations to the output of our $\RepLinearSpan$ algorithm, we immediately obtain replicable algorithms for these three tasks. We omit the formal details as they strictly follow the derivations in \cite{kaplan2025differentially}.

\section{PAC Learning of Parities}\label{sec:learning_parities}

In this section, we show how $\RepLinearSpan$ (Algorithm~\ref{alg:linspan}) can be used as a subroutine for replicable parity learning: 
Given i.i.d. samples $(x_1,y_1),\ldots,(x_m,y_m) \in \bbF_2^d \times \bbF_2$ drawn from an (unknown) distribution $\cD$ that is realizable with respect to the class of parity functions $\cH_{PARITY}^d = \set{f_z \colon z \in \bbF_2^d}$ (Definition~\ref{def:parity}),
the algorithm, with high probability, outputs $w \in \bbF_2^d$ such that $\Pr_{(x,y) \sim \cD}[f_w(x) = y]$ is high.

\begin{algorithm}[H]
\caption{Replicable Learner for PARITY functions}
\label{alg:parity2}
\textbf{Input:} Labeled sample $S = \{(x_i, y_i)\}_{i=1}^m \in (\mathbb{F}_2^d \times \mathbb{F}_2)^m$. \\
\textbf{Output:} A parity $f_w: \mathbb{F}_2^d \to \mathbb{F}_2$, or $\bot$.
\begin{algorithmic}[1]
\STATE Compute $U^* \sim \text{RepLinearSpan}(S)$ (Algorithm \ref{alg:linspan} over $\mathbb{F}_2^{d+1}$).
\STATE Let $e_{d+1} = (0,\dots,0, 1) \in \mathbb{F}_2^{d+1}$. 
\IF{$e_{d+1} \in U^*$}
    \RETURN $\bot$ 
\ELSE
    \STATE Filter the sample: $S_{U^*} = S \cap U^*$.
    \STATE Compute the set $C_{U^*} \subseteq \mathbb{F}_2^d$ of all solutions $w$ to the linear system:
    $$ \langle w, x_i \rangle = y_i  \quad \text{for all } (x_i, y_i) \in S_{U^*} $$
    \STATE Pick $w \leftarrow C_{U^*}$ uniformly at random.
    \RETURN the function $f_w(x) = \langle w, x \rangle$.
\ENDIF
\end{algorithmic}
\end{algorithm}

\begin{theorem}[Restatement of Theorem~\ref{thm:intro:parities}]
\label{thm:parity_learner2}
For every $d \in \mathbb{N}$ and $\rho, \delta, \varepsilon \in (0,1)$, for a data set of size
$m = O\left( \frac{d^6}{\rho^2 \varepsilon^2} \log\left(\frac{1}{\rho}\right) + \frac{\log(1/\delta)}{\varepsilon} \right)$,
Algorithm \ref{alg:parity2} satisfies the following properties:
\begin{enumerate}
    \item \textbf{Replicability:} It is $\rho$-replicable for any  distribution $\mathcal{D}$ over $\mathbb{F}_2^d \times \bbF_2$.
    \item \textbf{Accuracy:} If $\mathcal{D}$ is realizable with respect to $\mathcal{H}_{PARITY}^d$, it PAC-learns the class.
\end{enumerate}
Furthermore, the running time is bounded by $O(m^2 d^3)$.
\end{theorem}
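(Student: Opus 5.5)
The proof treats replicability and accuracy separately, with Theorem~\ref{thm:linspan} applied in dimension $d+1$ doing most of the work.

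\textbf{Replicability.} Run $\RepLinearSpan$ with parameters $(\eps/2,\rho/2)$ on the points $(x_i,y_i)\in\bbF_2^{d+1}$. By Theorem~\ref{thm:linspan}, two executions on $S_1,S_2$ with shared randomness output the same $U^*$ with probability at least $1-\rho/2$. This holds for an arbitrary distribution over $\bbF_2^{d+1}$, realizable or not.

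Condition on $U^*_1=U^*_2=U^*$. The test $e_{d+1}\in U^*$ is then identical in both runs, so either both output $\bot$ or neither does. Suppose $e_{d+1}\notin U^*$. I claim the solution set $C_{U^*}$ depends only on $U^*$, because $\mathrm{span}(S_{U^*})=U^*$. To see this, look at the stable partition: $V^*$ equals $\mathrm{span}(A_j)$ for some part $A_j\subseteq S$. Every element of $A_j$ lies in $V^*$, hence in $S_{U^*}$, so $S_{U^*}$ spans $U^*$.

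Since $e_{d+1}\notin U^*$, the projection $U^*\to\bbF_2^d$ that drops the last coordinate is injective on $U^*$. Hence $U^*=\{(x,\ell(x)) : x\in X\}$ for a linear map $\ell$ on the projected subspace $X$. The constraints $\ip{w,x}=y$ for $(x,y)\in S_{U^*}$ are then equivalent to $\ip{w,x}=\ell(x)$ for all $x\in X$, because $S_{U^*}$ spans $U^*$ and the constraints are linear. So $C_{U^*}=\{w : w|_X=\ell\}$ is a nonempty affine subspace determined by $U^*$ alone. Consequently the shared uniform choice $w\leftarrow C_{U^*}$ coincides in both runs. To make "same randomness gives same sample" literal, I would fix a canonical encoding, for example a canonical basis of the affine space together with shared random coefficients. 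Overall the algorithm is $\rho$-replicable; in fact $\rho/2$ suffices.

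\textbf{Accuracy.} Let $\cD$ be realizable by $f_z$. Every sample lies in the hyperplane $H_z=\{(x,y): y=\ip{z,x}\}$, which does not contain $e_{d+1}$. Since $U^*\subseteq \mathrm{span}(S)\subseteq H_z$, the algorithm never outputs $\bot$. By the previous paragraph, the output $w$ agrees with $\ell$ on $X$, so $f_w$ correctly labels every point of $U^*$. Moreover $U^*\subseteq H_z$ contains the true labeled point $(x,\ip{z,x})$ whenever $x\in X$. Hence $L_\cD(f_w)\le \Pr_{v\sim\cD}[v\notin U^*]$.

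By Theorem~\ref{thm:linspan}, the empirical mass outside $U^*$ is at most $\eps/2$. This step is the main obstacle. $U^*$ is data-dependent, so we need uniform convergence over candidate subspaces, and there are exponentially many subspaces of $\bbF_2^{d+1}$ (about $2^{O(d^2)}$). A naive union bound therefore costs $d^2/\eps$ samples, which fits inside the first term of $m$ but loses the clean $\log(1/\delta)/\eps$.

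I would first try the standard realizable-style (one-sided, multiplicative Chernoff) bound for the class of indicators of complements of subspaces. Its VC dimension is $O(d)$, since subspaces of $\bbF_2^{d+1}$ are intersections of at most $d+1$ hyperplanes; halfspace-like counting gives $O(d^2)$. This yields: with probability $1-\delta$, every $U$ with empirical miss rate at most $\eps/2$ has true miss rate at most $\eps$, provided $m=O((d^2\log(1/\eps)+\log(1/\delta))/\eps)$. The $d^2$ part is absorbed by the first term $d^6\log(1/\rho)/(\rho^2\eps^2)$, leaving the additive $\log(1/\delta)/\eps$ claimed in the theorem.

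\textbf{Running time.} Computing $\RepLinearSpan$ costs $O(m^2d^3)$. Filtering, Gaussian elimination and sampling from $C_{U^*}$ cost $O(md^2)$, which is dominated.
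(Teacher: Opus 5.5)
Your proof is correct. The replicability part follows the paper: condition on $U_1^*=U_2^*$, observe that the $\bot$ test is then shared, and argue that $C_{U^*}$ depends only on $U^*$. You also justify two points the paper only asserts. First, $\mathrm{span}(S_{U^*})=U^*$, because $U^*$ is the span of some part $A_j$ of the stable partition. Second, $C_{U^*}$ is the nonempty affine space $\{w : \ip{w,\cdot}|_X=\ell\}$.

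Your accuracy argument takes a genuinely different route. The paper never reasons about $U^*$ at the population level. Since $w$ is consistent with $S_{U^*}$, $f_w$ has empirical error at most $\eps/2$. A multiplicative Chernoff bound plus a union bound over the $2^d$ parities then gives $L_\cD(f_w)\le\eps$ once $m=O((d+\log(1/\delta))/\eps)$.

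You instead prove $L_\cD(f_w)\le\Pr_{v\sim\cD}[v\notin U^*]$, which requires uniform convergence over subspaces of $\bbF_2^{d+1}$. This is valid. It also yields a stronger by-product: $U^*$ itself covers a $1-\eps$ fraction of $\cD$, which is the corollary stated in the introduction. But it is heavier than the theorem needs.

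Your concern about the naive union bound is unfounded. There are at most $2^{(d+1)^2}$ subspaces, so the one-sided Chernoff bound plus a union bound needs only $m=O((d^2+\log(1/\delta))/\eps)$. The $d^2/\eps$ part is absorbed by the first term, exactly as in your VC argument, and the additive $\log(1/\delta)/\eps$ stays intact. So the relative-deviation VC bound is unnecessary.

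If you do keep the VC route, drop the ``$O(d^2)$'' hedge: the VC dimension is exactly $d+1$. Linearly independent sets are shattered. Among any $d+2$ points, some point lies in the span of the others, so no subspace can contain the rest while excluding it.
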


The proof of Theorem~\ref{thm:parity_learner2} appears in Appendix~\ref{sec:learning_parities_proof}.

\remove{
We next show how $\RepLinearSpan$ (Algorithm~\ref{alg:linspan}) can be used as a subroutine for replicable parity learning (Algorithm~\ref{alg:parity}).
By Proposition~\ref{prop:make-replicable}, to prove Theorem~\ref{thm:intro:parities}, it suffices to provide a polynomial time PAC learning algorithm that is $\rho$-replicable only for realizable distributions.


\bigskip
\begin{algorithm}[H]
\caption{Replicable Learner for PARITY functions}
\label{alg:parity}
\begin{algorithmic}[1]

\REQUIRE Labeled sample $S = \{(x_i ,y_i)\}_{i=1}^m \in (\mathbb{F}_2^d \times \bbF_2)^m$. 
\ENSURE A parity function $f_w \colon \mathbb{F}_2^d \rightarrow \bbF_2$.

\STATE Compute $V^* \sim \RepLinearSpan(x_1,\ldots,x_m)$ (Algorithm~\ref{alg:linspan}).


\STATE  
Compute the set $C_{S \cap V^*}$ of solutions $w \in \mathbb{F}_2^d$ to the following system of linear equations:
\[
\langle w, x_i \rangle = y_i  
\qquad \text{for all } x_i \in S \cap V^*.
\]


\STATE Pick  $w\la C_{S \cap V^*}$ uniformly at random and output the function $f_w(x) = \langle w,x \rangle $.

\end{algorithmic}
\end{algorithm}



\begin{theorem}\label{thm:parity} 
For every $d \in \mathbb{N}$, $\rho,\delta, \varepsilon>0$\remove{, and $2\rho \leq \delta$}, Algorithm \ref{alg:parity} is $\rho$-replicable w.r.t.\ $\cH_{PARITY}^d$-realizable distributions (Definition~\ref{def:realizable-dist}), and a PAC learner for the class $\cH_{PARITY}^d$, with sample size
$m = O\Big(\frac{d^6}{\rho^2 \varepsilon^2} \log \big(d/\rho \big) + \frac{\log(1/\delta)}{\varepsilon}\Big)$,
\remove{
\[
m = O\Big(\frac{d^6}{\rho^2 \varepsilon^2} \log \big(d/\rho \big) + \frac{\log(1/\delta)}{\varepsilon^2}\Big),
\]
}
where $\eps$ and $\delta$ are the accuracy parameters of the learning (Definition~\ref{def:PAC}).
The running time is bounded by $O(m^2 d^3)$.

\end{theorem}

}
\remove{
\Enote{OLD:
\begin{theorem} \label{thm:parity:old} 
For every $\rho,\delta, \varepsilon>0$, with 
$2\rho \le \delta$, and every distribution $\mathcal{D}$ that is realizable with respect to the class of parity functions. 
Algorithm \ref{alg:parity} is efficient and $\rho$-replicable PAC learner for PARITY functions
with sample size
\[
m \geq O\Big(\frac{d^6}{\rho^2 \varepsilon^2} \log \big(\frac{12d}{\rho} \big) + \frac{\log \frac{2}{\delta}}{\varepsilon^2}\Big)
\]

\end{theorem}
}
}

\remove{
\begin{remark}
    The guarantees provided in Theorem~\ref{thm:parity} rely on the assumption that the distribution $\mathcal{D}$ is realizable with respect to parity functions. To achieve unconditional replicability one can apply the \texttt{MakeReplicable} transformation described in Section~\ref{sec:global_rep} to algorithm \ref{alg:parity}.
\end{remark}
}

\section{Limits on Generic Conversions to Pure Differential Privacy}
\label{sec:conversion_impossibility}

 In this section, we leverage our replicable parity learner (Theorem \ref{thm:parity_learner2}) to get a fundamental limitation in algorithmically converting replicable algorithms to pure differentially private algorithms.
 As shown in Theorem \ref{thm:parity_learner2}, our replicable parity learner achieves  optimal dependency in the failure parameter. Namely, the sample complexity scales as $O(\frac{\log(1/\delta)}{\varepsilon})$.

However, the parities pure private learner given by \cite{KLNRS11} requires $O(\frac{d \log \frac{1}{\delta}}{\varepsilon\alpha})$ samples, where $\alpha$ is the privacy paremeter. The work of \cite{georgiev2022privacy} shows that this gap cannot be closed with polynomial-time private algorithms, unless $NP=RP$. Namely, they prove that computationally efficient pure differential privacy fundamentally cannot enjoy this optimal confidence for parities:

\begin{theorem}[\cite{georgiev2022privacy}, Corollary 4.3] \label{thm:gh22}
Suppose $\text{RP} \neq \text{NP}$. Then every polynomial-time $\alpha$-DP algorithm which for any $\delta > 0$ can PAC-learn $d$-variable PARITYs to accuracy $\varepsilon$, succeeding with probability $1-\delta$, requires $m \ge \omega\left(\frac{\log(1/\delta)}{\alpha\varepsilon}\right)$ samples.
\end{theorem}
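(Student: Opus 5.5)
The plan is to derive Theorem~\ref{thm:gh22} from the ``privacy implies robustness'' principle combined with the NP-hardness of approximately solving linear equations over $\bbF_2$ \citep{haastad2001some}. I argue by contradiction. Suppose there is a polynomial-time $\alpha$-DP PAC learner $\cA$ for $d$-variable parities with $m \le C\log(1/\delta)/(\alpha\varepsilon)$ samples for some constant $C$ and all $\delta$. I will show that $\cA$ yields a randomized polynomial-time algorithm with one-sided error for Gap-3LIN, which would put NP in RP.

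\emph{Step 1 (group privacy gives robustness).} Let $S$ and $S'$ differ in at most $k$ entries. Applying Definition~\ref{def:privacy} $k$ times gives $\Pr[\cA(S)\in\cY]\le e^{\alpha k}\Pr[\cA(S')\in\cY]$. Take $\cY$ to be the event that the output has error greater than $\varepsilon$ with respect to a fixed realizable distribution $\cD'$. I then average over a coupling of $S\sim\cD^m$ with $S'\sim\cD'^m$ in which $S$ and $S'$ differ in at most $k$ positions, except with small probability. This gives
\[
\Pr_{S\sim\cD^m}[\cA(S)\in\cY]\;\le\; e^{\alpha k}\delta+\Pr[\text{coupling fails}].
\]

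\emph{Step 2 (the reduction).} Let the instance be a system of linear equations over $\bbF_2$ in which either at least a $1-\eta$ fraction of the equations is simultaneously satisfiable, or at most a $1/2+\eta$ fraction is. Let $\cD$ be uniform over the labeled pairs $(x_i,y_i)$ encoding the equations. In the YES case, let $z$ satisfy a $(1-\eta)$ fraction, and let $\cD'$ be uniform over the satisfied equations; this $\cD'$ is realizable by $f_z$. Then $\cD=(1-\eta)\cD'+\eta\,\cD''$ for some distribution $\cD''$. The natural coupling draws the same sample except at the coordinates that take the $\cD''$ branch. By Chernoff, fewer than $k=2\eta m$ coordinates differ, except with probability $e^{-\Omega(\eta m)}$. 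Now choose $\eta=c\varepsilon$ with $c$ small, say $c=1/(4C)$. Then
\[
\alpha k\le 2c\varepsilon\alpha\cdot C\log(1/\delta)/(\alpha\varepsilon)=\tfrac12\log(1/\delta),
\]
so the failure probability is at most $\sqrt\delta+e^{-\Omega(\eta m)}$. I take $\varepsilon$ and $\delta$ to be small constants. Then with constant probability the output $w$ has error at most $\varepsilon$ on $\cD'$, and hence violates at most a $\varepsilon+\eta$ fraction of the equations. The algorithm simply checks the fraction of equations satisfied by $w$ and accepts if it exceeds $1/2+\eta$. In the NO case no $w$ passes this check, so the error is one-sided. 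Choosing $\varepsilon+\eta<1/2-\eta$ separates the two cases, and repeating boosts the success probability. By Håstad's theorem this gives an RP algorithm for an NP-hard problem.

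\emph{Main obstacle.} The delicate step is the parameter bookkeeping in Step 2. The hypothesis holds ``for any $\delta$'', but the reduction must use a fixed constant gap $\eta$ for which Håstad's hardness applies. The only freedom available is to let $\varepsilon$ scale with $\eta$, and the argument works only if $\alpha\eta m$ stays at most $\tfrac12\log(1/\delta)$. This is exactly where the assumed bound $m=O(\log(1/\delta)/(\alpha\varepsilon))$ enters, and why the conclusion is $\omega(\cdot)$: any constant $C$ can be defeated by taking $c=1/(4C)$. One more care point is that $\delta$ cannot be so small that $\sqrt\delta$ is negligible while $e^{-\Omega(\eta m)}$ is not; since $m\ge\log(1/\delta)/\varepsilon$ is needed for learning anyway, the Chernoff term is at most polynomially small in $\delta$, so this is benign.
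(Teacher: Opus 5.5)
The paper does not prove this statement. It imports it from \cite{georgiev2022privacy}, and the argument there is the one you outline. Group privacy makes a pure-DP learner with very small failure probability robust to an $\eta$-fraction of corrupted samples, and H\aa{}stad's Gap-3LIN hardness rules out efficient proper parity learners with that robustness. Your coupling, the group-privacy estimate, and the one-sided check are fine. The check relies on the output being a parity, which Definition~\ref{def:PAC} requires.

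The step that fails is \emph{take $\eps$ and $\delta$ to be small constants}. Your key inequality $\alpha k\approx 2\alpha\eta m\le \tfrac12\log(1/\delta)$ cannot hold for constant $\delta$ once $d$ is large.

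\begin{itemize}
\item By a packing argument, any pure-DP learner of $d$-variable parities needs $m=\Omega(d/(\alpha\eps))$ samples. So $\alpha\eta m=\Omega(\eta d/\eps)=\Omega(d/C)$, which forces $\log(1/\delta)=\Omega(d/C)$.
\item Put differently, for fixed $\delta$ your assumed bound $m\le C\log(1/\delta)/(\alpha\eps)$ is a constant independent of $d$. On H\aa{}stad's instances, whose number of variables grows, this bound is already refuted by counting. The contradiction you reach there is not with $\mathrm{NP}\not\subseteq\mathrm{RP}$, so you would only be proving a trivial version of the statement.
\item The corollary has content only when $\log(1/\delta)$ dominates the $d$-dependent part of the sample complexity. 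Compare the exponential mechanism's $O((d+\log(1/\delta))/(\alpha\eps))$ with the efficient $O(d\log(1/\delta)/(\alpha\eps))$; this additive-in-$\log(1/\delta)$ reading is how the paper uses the result. Under that reading, with constant $\delta$ the factor $e^{\alpha k}$ swamps $\delta$, and Step~1 gives nothing.
\end{itemize}

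The repair is to invoke the \emph{for any $\delta$} hypothesis at $\delta=2^{-\Theta(d)}$. More generally, take $\log(1/\delta)$ at least the polynomial $d$-dependent term, so that $m\le 2C\log(1/\delta)/(\alpha\eps)$, and set $\eta=\eps/(8C)$. Then:
\begin{itemize}
\item $e^{\alpha k}\delta\le\sqrt\delta$, which is exponentially small;
\item the Chernoff term is $e^{-\Omega(\eta m)}\le\delta^{\Omega(1/C)}$, because $m=\Omega(\log(1/\delta)/\eps)$;
\item the learner still runs in $\mathrm{poly}(d)$ time, since $\log(1/\delta)=\mathrm{poly}(d)$.
\end{itemize}
With this change the rest of your reduction goes through. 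Your final paragraph treats fixing $\eta$ as the main obstacle, but the bookkeeping constraint it leaves out is that $\delta$ must be exponentially small in $d$.
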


By combining Theorem \ref{thm:parity_learner2} and Theorem \ref{thm:gh22}, we derive the following impossibility result: one cannot generically and efficiently convert replicability into pure DP without paying a super-logarithmic penalty in sample complexity. 

\repdpimpos*

\begin{proof}
Assume towards contradiction that such a transformation exists. Applying it to our replicable parity learner (Algorithm 2) would yield a polynomial-time pure DP PAC learner for $d$-variable PARITY with a sample complexity scaling of $O(\log(1/\delta))$. However, by Theorem \ref{thm:gh22}, any such efficient pure DP learner strictly requires $\omega(\log(1/\delta))$ samples, unless $\text{RP} = \text{NP}$. 
\end{proof}

\section*{Acknowledgments}

The authors would like to thank Russell Impagliazzo, Toniann Pitassi, Sarit Kraus, Ariel Vetzler, and Mark Bun for useful discussions. 

\bibliographystyle{alpha}
\bibliography{refs}

\appendix


\section{Proof of Theorem \ref{thm:linspan}}\label{sec:linear_span_proof}

In the following, we prove that $\RepLinearSpan$ returns a replicable subspace that contains most of the input sequence.

\begin{remark}
We assume the subspace output by Algorithm \ref{alg:linspan} is represented in a canonical form (e.g., via the Reduced Row Echelon Form basis).
\end{remark}

\begin{proof}[Proof of Theorem~\ref{thm:linspan}]

We set $T_{\max} = \frac{12d}{\rho}\sqrt{m \log (2/\rho)}$ and show this choice satisfies replicability and coverage. 

\paragraph{Replicability.}

Let $\cD$ be a distribution over $\mathbb{F}^d$, let $\mathbf{S_1}, \mathbf{S_2}$ be independent datasets, each consisting of $m$ i.i.d.\ samples from $\cD$, let $\bt \leftarrow (0,T_{max}]$, and let $\bV^*_i = \RepLinearSpan(\bS_i;\bt)$. Our goal is to prove that $Pr[\bV^*_1 = \bV^*_2] \geq 1-\rho$.

The output $\mathbf{V}^*$ of the algorithm is a deterministic function of the heavy set $\mathbf{P}_{heavy}$, computed in step 4 of the algorithm. 
Therefore, to prove that $\mathbf{V}^*_1 = \mathbf{V}^*_2$, it is sufficient to show that both executions compute the same set, i.e., that $\cP_{heavy}(\mathbf{S}_1; \mathbf{t}) = \cP_{heavy}(\mathbf{S}_2; \mathbf{t})$ with high probability, where $\cP_{heavy}(S;t)$ denotes the value of $\cP_{heavy}$ (computed at Step~\ref{step:Pheavy} of $\RepLinearSpan$) given input $S$ and a fixed threshold $t$.  

Let $\bU = \{V \subseteq \mathbb{F}^d : n_{\mathbf{S}_1}(V) > 0 \text{ or } n_{\mathbf{S}_2}(V) > 0\}$. By the nested subspaces property (see Theorem \ref{thm:stablePartition}), each sample generates at most $d$ distinct subspaces. Therefore, it holds with probability $1$ that $|\bU| \le 2d$. Becasue we take $\mathbf{t}>0$, both executions will exclude all $V \notin \bU$ from $\mathcal{P}_{heavy}$ sets. Therefore, the sets $\cP_{heavy}(\mathbf{S}_1; \mathbf{t})$ and $\cP_{heavy}(\mathbf{S}_2; \mathbf{t})$ will differ if and only if the shared threshold $\mathbf{t}$ falls strictly between $n_{\mathbf{S}_1}(V)$ and $n_{\mathbf{S}_2}(V)$ for some subspace $V \in \bU$.

We next use our key technical lemma -  Lemma~\ref{lem:phiwhp}.

\bigskip
By Lemma \ref{lem:phiwhp}, with probability at least $1-\rho/2$ over the choice of $\mathbf{S}_1$ and $\mathbf{S}_2$, the gap is bounded globally:

\[ \sup_{V \subseteq \mathbb{F}^d} |n_{\bS_1}(V) - n_{\bS_2}(V)| \le 3\sqrt{m \log(2/\rho)}.
\]

Conditioned on this event, the combined length of the bad zones is at most $2d  \big(3\sqrt{m \log(2/\rho)}\big)=6d\sqrt{m \log(2/\rho)}$. Therefore, the probability that $\mathbf{t}$, which is independent of $\bU$, lands in any bad zone is simply
\[
\frac{6d\sqrt{m \log(2/\rho)}}{T_{max}} = \frac{6d\sqrt{m \log(2/\rho)}}{\frac{12d}{\rho}\sqrt{m \log(2/\rho)}} =\rho/2
\]

Applying a union bound over the two failure events, the total probability that replicability fails is bounded by $\rho/2 + \rho/2 = \rho$

\paragraph{Coverage.}

Let $S = \set{v_1,\ldots,v_m} \in \mathbb{F}^d$, and let $\bV^* = \RepLinearSpan(S)$. 
Observe that item 1 immediately follows since $\bV^*$ is spanned by a subset of the input vectors. 

The proof of item 2 follows by the following claim, proven in Section~\ref{sec:proving-clm:spancover}.

\begin{claim} \label{clm:spancover}
If $T_{\max} \leq \frac{12d}{\rho}\sqrt{m \log(2/\rho)}$ and $m \geq \frac{144d^6}{\rho^2\varepsilon^2}\log(\frac{2}{\rho})$, 

then $\Pr[\frac{|S \setminus \bV^*|}{m} \leq \varepsilon] = 1$.
\end{claim}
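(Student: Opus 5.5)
The plan is to combine the nested-subspaces property of the stable partition (Theorem~\ref{thm:stablePartition}) with the observation that every set in the partition has at most $d$ vectors. Together these show that the vectors missed by $\bV^*$ lie in a small number of light sets. The argument is deterministic: it works for every $S$ and every threshold $t\in(0,T_{\max}]$.

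\emph{Step 1 (light sets).} Fix $S$ and $t$. By the nested-subspaces property, the spans of the sets $A_j$ form a chain $V_1\supseteq\cdots\supseteq V_\ell$ with $\ell\le d$. Since the chain is totally ordered by inclusion, $\dim(V_i)\le \dim(V_k)$ implies $V_i\subseteq V_k$. Let $\bV^*$ be the maximal-dimension heavy subspace. Every $V_i$ with $\dim V_i\le\dim \bV^*$ is then contained in $\bV^*$, so all vectors of the sets spanning such $V_i$ lie in $\bV^*$. Hence every $v\in S\setminus \bV^*$ belongs to some $A_j$ with $\mathrm{span}(A_j)\supsetneq \bV^*$. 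Such a subspace has strictly larger dimension than $\bV^*$, so by maximality it is not heavy, i.e.\ $n_S(\mathrm{span}(A_j))<t\le T_{\max}$.

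\emph{Step 2 (counting).} There are at most $d$ such light subspaces, each spanned by fewer than $T_{\max}$ sets, and each $A_j$ is linearly independent, so $|A_j|\le d$. Therefore
\[
|S\setminus\bV^*|\le d\cdot d\cdot T_{\max}\le \frac{12d^3}{\rho}\sqrt{m\log(2/\rho)}.
\]
This quantity is at most $\eps m$ exactly when $\sqrt{m}\ge \frac{12d^3}{\rho\eps}\sqrt{\log(2/\rho)}$, which is the assumed bound $m\ge\frac{144 d^6}{\rho^2\eps^2}\log(2/\rho)$.

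\emph{Step 3 (degenerate case).} It remains to handle $\cP_{\text{heavy}}=\emptyset$, where $\bV^*$ is undefined. In that case every subspace is light, so the same count gives $m\le d^2 T_{\max}$, which contradicts the bound on $m$ whenever $\eps<1$. Thus $\cP_{\text{heavy}}$ is nonempty and $\bV^*$ is well defined. Alternatively, if one adopts the convention $\bV^*=\{0\}$, the bound from Step~2 still applies directly.

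The only delicate point is Step~1: the stable partition must make all generated spans comparable, and this is exactly what the nested-subspaces property guarantees. Once that is in place, the rest is arithmetic.
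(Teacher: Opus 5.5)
Your proof is correct and matches the paper's. Nestedness places every uncovered vector in a light set whose span strictly contains $\bV^*$. Counting then gives at most $d$ light subspaces, each spanned by at most $T_{\max}$ sets of at most $d$ vectors, so $|S\setminus\bV^*|\le d^2T_{\max}\le\eps m$. Your Step~3 is the contrapositive of the paper's pigeonhole argument (Claim~\ref{clm:pigeonhole}), and it explicitly checks that the hypotheses give $d^2T_{\max}<m$, which the paper leaves implicit.
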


Note that the conditions of the claim are indeed satisfied. Specifically, the first condition holds by our explicit choice of $T_{max}$, and the second condition is satisfied by the sample complexity $m$ established in the theorem statement. Thus, the coverage guarantee holds with probability $1$.

\paragraph{Running-time.}

The Stable Partition algorithm (see Appendix \ref{sec:stable_partition}) performs at most $O(m^2)$ linear independence tests in total. Each test can be implemented in polynomial time (e.g., via Gaussian elimination). Therefore, the total running time of Stable Partition is
$O(m^2 d^3)$.

In Step~2 of Algorithm \ref{alg:linspan}, the algorithm considers only the subspaces spanned
by the sets in the partition $\mathcal{P}_{S}$.
By the nested subspaces property, there are at most $d$ distinct such subspaces.
Computing a canonical representation of each subspace and iterating over the
partition can be done in $O(m d^3)$ time.

The remaining steps, sampling the threshold $t$, identifying the heavy
subspaces, and selecting the maximum-dimension subspace, can all be performed
in $O(d)$ time.
Overall, Algorithm \ref{alg:linspan} runs in time $O(m^2 d^3)$.

\end{proof}

\subsection{Proving Claim~\ref{clm:spancover}}\label{sec:proving-clm:spancover}

In the following, recall that we fixed a dataset $S \in (\mathbb{F}^d)^m$, that $\cP_S$ denotes the stable partition of $S$ into linearly independent sets $\set{A_1,\ldots,A_M}$ (according to Section~\ref{subsec:kmm}), and that for every subspace $V\subseteq \mathbb{F}^d$ we denote $n_{S}(V) = |\{ j : \mathrm{span}(A_j) = V \}|$. Let $\bP_{heavy}$ and $\bV^*$ be the values of $\cP_{heavy}$ and $V^*$ in a random execution of $\RepLinearSpan(S)$.

The proof of Claim~\ref{clm:spancover} uses the following claim.

\begin{claim} \label{clm:pigeonhole}
If $T_{\max} < m/d^2$, then $\bP_{\mathrm{heavy}}\neq \emptyset$ with probability $1$.
\end{claim}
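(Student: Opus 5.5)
We will show deterministically that some subspace has multiplicity at least $m/d^2$. Since the threshold satisfies $\bt \le T_{\max} < m/d^2$ with probability $1$, that subspace then lies in $\bP_{heavy}$. The argument is a pigeonhole over the stable partition $\cP_S=\set{A_1,\ldots,A_M}$, using two facts: each part is small, and only few distinct spans occur.

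First, we bound the number of parts from below. Each $A_j$ is linearly independent in $\bbF^d$, so $|A_j| \le d$. The parts partition all $m$ input vectors (counted with multiplicity as a sequence), so $m = \sum_{j=1}^M |A_j| \le M d$, which gives $M \ge m/d$.

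One subtlety: a zero vector cannot belong to a linearly independent set. If the partition algorithm handles zero vectors by placing them, say, as singletons spanning $\set{0}$, we must check against Appendix~\ref{sec:stable_partition} exactly how they are treated. If zero vectors form their own parts, those parts span $\set{0}$ and are still counted by $n_S(\set{0})$, so the counting $M\ge m/d$ survives, possibly with $|A_j|\le d$ replaced by $\max(1,\cdot)$. If instead zeros are discarded, we would either need the claim for nonzero inputs or note that the zero subspace covers them. Confirming this convention is the main thing to verify; the rest is immediate.

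Second, we bound the number of distinct spans. By the nested subspaces property of Theorem~\ref{thm:stablePartition}, every $\mathrm{span}(A_j)$ equals one of at most $\ell\le d$ subspaces $V_1\supseteq\cdots\supseteq V_\ell$. Hence
\[
\sum_{i=1}^{\ell} n_S(V_i) = M \ge m/d .
\]
By pigeonhole, some $V_i$ has $n_S(V_i) \ge M/\ell \ge m/d^2$.

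Finally, because $\bt \in (0,T_{\max}]$ and $T_{\max} < m/d^2$, we get $n_S(V_i) \ge m/d^2 > T_{\max} \ge \bt$. So $V_i \in \bP_{heavy}$ for every realization of $\bt$, and $\bP_{heavy}\neq\emptyset$ with probability $1$.
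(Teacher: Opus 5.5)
Your proof is correct and is the same pigeonhole argument the paper gives: $|A_j|\le d$ forces $|\cP_S|\ge m/d$, the nested-subspaces property limits the distinct spans to at most $d$, so some $V$ has $n_S(V)\ge m/d^2>T_{\max}\ge\bt$. Your zero-vector caveat is legitimate and the paper's proof passes over it silently: the stable partition in Appendix~\ref{sec:stable_partition} assumes $v_i\neq 0$, so the claim as proved holds under that nonzero-input assumption.
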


\begin{proof}
Each $A_j$ is linearly independent, hence $|A_j|\le d$, so
$|\mathcal{P}_S|\ \ge\ \frac{m}{d}.$
Moreover, the algorithm from Theorem~\ref{thm:stablePartition} produces a nested chain of subspaces, so the number of distinct subspaces is at most $d$. By the pigeonhole principle, there exists a subspace $V$ with
$n_{S}(V)\ \ge\ \frac{|\mathcal{P}_S|}{d}\ \ge\ \frac{m}{d^2}.$
Then for $T_{\max}<\frac{m}{d^2}$ it must be that $\bP_{\text{heavy}}\neq\emptyset$.
\end{proof}

We are now ready to prove Claim~\ref{clm:spancover} using Claim~\ref{clm:pigeonhole}.

\begin{proof}[Proof of Claim~\ref{clm:spancover}]
Recall that $\bV^*$ is the subspace of maximum dimension among \(\bP_{\text{heavy}}\), and let $\mathbf{k}=dim(\bV^*)$ (by Claim \ref{clm:pigeonhole}, $\bV^*$ is well-defined and $\mathbf{k}>0$). By the nested property of the subspaces, 
each input vector outside $\bV^*$ must belong to an independent set spanning subspace $V'$ of dimension larger than $\mathbf{k}$.  

Each set spanning a subspace $V'$ with $dim(V')>\mathbf{k}$ is of size at most $d$. there are at most $d-\mathbf{k} < d$ such subspaces. By the fact that these subspaces are not in  \(\bP_{\text{heavy}}\), we know that each one of them is spanned by at most $\mathbf{t}\le T_{max}$ sets. Therefore $|S\setminus \bV^*| \leq d^2 T_{max} $.

The fraction of vectors outside $\bV^*$ is at most 

\begin{align*}
   \frac{|S \setminus V^*|}{m} \\
&\leq \frac{d^2 T_{max}}{m} \\
&\leq \frac{d^2}{m} \left( \frac{12d}{\rho}\sqrt{m \log(2/\rho)} \right) \\ &= \frac{12d^3}{\rho}\sqrt{\frac{\log(2/\rho)}{m}} \\
&\leq \frac{12d^3}{\rho}\sqrt{\frac{\rho^2\varepsilon^2}{144d^6}} = \varepsilon
\end{align*}

which concludes the proof of the claim.
\end{proof}

\section{Proof of Theorem  \ref{thm:parity_learner2}}\label{sec:learning_parities_proof}

To prove the correctness of Algorithm 2, we first prove the following claim. 

\begin{claim}
\label{clm:condition_realizability}
Let $S = \{(x_i, y_i)\}_{i=1}^m \subseteq \mathbb{F}_2^d \times \mathbb{F}_2$ be an arbitrary labeled sample\remove{, and let $S_{ext} = \{(x_i, y_i)\}_{i=1}^m \subseteq \mathbb{F}_2^{d+1}$ be its extended representation}. Let $e_{d+1} = (0,\ldots,0, 1) \in \mathbb{F}_2^{d+1}$. The sample $S$ is realizable by some parity function $f_w(x) = \langle w, x \rangle$ if and only if $e_{d+1} \notin \text{Span}(S)$\remove{$e_{d+1} \notin \text{span}(S_{ext})$}.
\end{claim}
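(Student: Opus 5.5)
We view each labeled example $(x_i,y_i)$ as a vector in $\bbF_2^{d+1}$ and prove both directions by linear algebra: realizability says a certain linear functional kills every point of $S$, and $e_{d+1}$ lies outside the span exactly when such a functional exists. For $w \in \bbF_2^d$, define the functional $\phi_w \colon \bbF_2^{d+1} \to \bbF_2$ by $\phi_w(x,y) = \ip{w,x} + y$. Since we are over $\bbF_2$, the statement ``$S$ is realizable by $f_w$'' is equivalent to $\phi_w(x_i,y_i)=0$ for every $i$.

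\textbf{Realizable $\Rightarrow$ $e_{d+1}\notin Span(S)$.} If $f_w$ realizes $S$, then $\phi_w$ vanishes on every point of $S$. By linearity it therefore vanishes on all of $Span(S)$. However, $\phi_w(e_{d+1}) = \ip{w,0}+1 = 1 \neq 0$, so $e_{d+1}\notin Span(S)$.

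\textbf{$e_{d+1}\notin Span(S)$ $\Rightarrow$ realizable.} Let $U = Span(S)$. Since $e_{d+1}\notin U$, we can extend a basis of $U$ by $e_{d+1}$ and then complete it to a basis of $\bbF_2^{d+1}$. Define a linear functional $\psi$ that is $0$ on the basis of $U$, $1$ on $e_{d+1}$, and arbitrary on the remaining basis vectors. Every linear functional on $\bbF_2^{d+1}$ has the form $\psi(x,y) = \ip{a,x} + b y$ for some $a\in\bbF_2^d$ and $b\in\bbF_2$. Evaluating at $e_{d+1}$ gives $b = \psi(e_{d+1}) = 1$, so $\psi = \phi_a$. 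Because $\psi$ vanishes on $U \supseteq S$, we get $\ip{a,x_i} = y_i$ for all $i$, so $f_a$ realizes $S$.

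The only step that needs any care is the second direction: we must construct the separating functional and check that its coefficient on the last coordinate is nonzero, so that it can be normalized into the form $\phi_w$. Over $\bbF_2$ that coefficient is automatically $1$, so there is no normalization to do, and the proof is entirely routine.
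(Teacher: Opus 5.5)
Your proof is correct and essentially the paper's. The forward direction is identical: your functional $\phi_w$ is the paper's vector $(w,1)$, which is orthogonal to $\mathrm{Span}(S)$ but not to $e_{d+1}$. In the reverse direction, the paper gets $z=(w,1)\in V^\perp$ with $\langle z,e_{d+1}\rangle=1$ by citing $V=(V^\perp)^\perp$, while you build the separating functional directly by basis extension; this is just the standard proof of that fact, so your version is slightly more self-contained.
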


\begin{proof}
Assume $S$ is realizable by some target parity $w \in \mathbb{F}_2^d$. This implies that for every $(x_i, y_i) \in S$, we have $\langle w, x_i \rangle = y_i$, which is  equivalent to $\langle (w, 1), (x_i, y_i) \rangle = 0$. By the linearity, $(w, 1)$ must be orthogonal to every vector $v \in \text{Span}(S)$. 

Assume towards contradiction that $e_{d+1} \in \text{Span}(S)$. Then $(w, 1)$ must be orthogonal to $e_{d+1}$. However, evaluating the inner product yields $\langle (w, 1), (\mathbf{0}, 1) \rangle = \langle w, \mathbf{0} \rangle + 1 \cdot 1 = 1 \neq 0$. This is a contradiction. Therefore, $e_{d+1} \notin \text{Span}(S)$.

For the other direction, Assume $e_{d+1} \notin \text{Span}(S)$. Let $V = \text{span}(S)$. In finite-dimensional linear algebra, a vector is excluded from a subspace if and only if there exists some vector in the orthogonal complement $V^\perp$ that is not orthogonal to it. In other words, $e_{d+1} \notin (V^\perp)^\perp$. 

This guarantees the existence of some vector $z \in V^\perp$ such that $\langle z, e_{d+1} \rangle \neq 0$. Since we operate over $\mathbb{F}_2$, this inner product must evaluate to exactly $1$, meaning the last coordinate of $z$ must be $1$. We can therefore write $z = (w, 1)$ for some $w \in \mathbb{F}_2^d$. Because $z \in V^\perp$, for every $(x_i, y_i) \in S$, we have $\langle (w, 1), (x_i, y_i) \rangle = 0 \implies \langle w, x_i \rangle = y_i$. Thus, $S$ is realizable by $w$.
\end{proof}

\begin{proof}[Proof of Theorem \ref{thm:parity_learner2}]
\textbf{Replicability} 
Consider two independent executions of Algorithm \ref{alg:parity2} on datasets $S_1, S_2 \sim \mathcal{D}^m$ drawn from an arbitrary distribution $\mathcal{D}$, using shared internal randomness. 

By Theorem \ref{thm:linspan}, \text{RepLinearSpan} guarantees that with probability at least $1-\rho$, both executions output identical linear subspaces: $U_1^* = U_2^* = U^*$. We condition on this event.

 If $e_{d+1} \in U^*$, both  return $\bot$. If $e_{d+1} \notin U^*$, both executions filter their datasets to obtain $S_{1,U^*},S_{2,U^*}$ with $\text{Span}(S_{1,U^*})=\text{Span}(S_{2,U^*})=U^*$.

Because $e_{d+1} \notin U^*$, by claim \ref{clm:condition_realizability}, this guarantees that $S_{1, U^*},S_{2, U^*}$ are  realizable. Thus, there's a vector $w$, s.t. for every $(x_i, y_i)$, $\langle w, x_i \rangle = y_i$. This means there's a vector $z=(w,1)\in \mathbb{F}_2^{d+1}$ s.t. $\langle z, (x_i,y_i) \rangle =0$ for all $i$. Then, solution set computed in step 8 is exactly 
\[
C_{U^*} = \big\{ w\in\mathbb{F}_2^d : (w,1) \in (U^*)^\perp  \big\}
\]

The set $C_{U^*}$ of consistent parities is not empty, and identical in both executions.

\vspace{2mm}
\noindent \textbf{Accuracy} 
Assume $\mathcal{D}$ is realizable by some target parity $w^*$. By claim \ref{clm:condition_realizability}, this guarantees $e_{d+1} \notin \text{span}(S_{ext})$. 

Let $U^*$ be the subspace returned by \text{RepLinearSpan}. Since $U^* \subseteq \text{span}(S_{ext})$, it is impossible for $e_{d+1}$ to reside in $U^*$. Therefore, Algorithm \ref{alg:parity2} does not return $\bot$ in this case. 

By Theorem \ref{thm:linspan}, provided $m = O\left( \frac{d^6}{\rho^2 \varepsilon^2} \log\left(\frac{d}{\rho}\right) \right)$, the subspace $U^*$ covers at least a $1 - \varepsilon/2$ fraction of the sample, so $|S_{U^*}| \ge m(1 - \varepsilon/2)$. The algorithm selects a hypothesis $w \in C_{U^*}$, which by definition correctly classifies every point in $S_{U^*}$. Thus, the empirical error of $f_w$ on the full sample $S$ is at most $\varepsilon/2$.

By the multiplicative Chernoff bound and a union bound over $\mathcal{H}_{PARITY}^d$, the probability that a hypothesis with true generalization error $L_{\mathcal{D}}(f) > \varepsilon$ achieves an empirical error $L_S(f) \le \varepsilon/2$ is bounded by $2^d \exp(-\varepsilon m / 8)$. Given a sample of size $m = O\left(\frac{d + \log(1/\delta)}{\varepsilon}\right)$,
with probability at least $1-\delta$,
no hypothesis with a true error greater than $\varepsilon$ will have an empirical error less than or equal to $\frac{\varepsilon}{2}$.
Since our algorithm produces a hypothesis $f_w$ with $L_S(f_w) \leq \frac{\varepsilon}{2}$, we have
\[
L_{\mathcal{D}}(f_w) \leq \varepsilon
\]

\vspace{2mm}
\noindent \textbf{Running Time:} \\
Algorithm 2 first invokes \text{RepLinearSpan} over dimension $d+1$, which runs in time $O(m^2 d^3)$. Checking whether $e_{d+1} \in U^*$ requires  $O(d^3)$ operations. Filtering the dataset takes $O(md)$ operations. Solving a system of at most $m$ linear equations in $d$ variables over $\mathbb{F}_2$ takes $O(md^2)$ time via Gaussian elimination. Finally, sampling a solution requires $O(d^2)$ time. Overall, the runtime is bounded by $O(m^2 d^3)$.
\end{proof}


\section{Proof of Lemma \ref{lem:phiwhp}} \label{sec:main_lem}

\begin{proof}
    Combining Claims~\ref{clm:concentration_phi} and \ref{clm:expectation_phi} (stated and proven below), we get that w.p.\ at least $1-\delta$
    \begin{align*}
    \sup_{V}|n_{S_1}(V)-n_{S_2}(V)| \\
    &\leq\mathbb{E}\Big[\sup_{V}|n_{S_1}(V)-n_{S_2}(V)|\Big]+\sqrt{m \log{1/\delta}} \\
        & \leq \sqrt{4m}+\sqrt{m \log{1/\delta}} \\
        &\leq 3\sqrt{m\log{1/\delta}}.
    \end{align*}
\end{proof}

\bigskip

Next, we define the function $\Phi(S_1, S_2)$ which we will use in the following proofs. 

\begin{definition}
Let $S_1, S_2$ be two independent samples drawn from the distribution $\mathcal{D}$ over $\mathbb{F}^d$.  Let $n_S(V)$ denote the multiplicity of $V$. We define the function $\Phi(S_1, S_2)$ as the maximum deviation in the multiplicity of any subspace $V$ across the two samples: 
\[
\Phi(S_1,S_2) = \sup_{V}|n_{S_1}(V)-n_{S_2}(V)| 
\]
\end{definition}

\begin{claim} \label{clm:concentration_phi}
 with probability $1-\delta$ over the  samples $S_1,S_2\sim\mathcal{D}^{m}$ , 
\[
\Phi \leq \mathbb{E}\Phi + \sqrt{m \log(1/\delta)}. 
\]   
\end{claim}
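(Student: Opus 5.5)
We will apply McDiarmid's inequality (Theorem~\ref{McDiarmid}) to $\Phi$, viewed as a function of the $2m$ independent random variables $(\mathbf{z}_1,\ldots,\mathbf{z}_{2m})$. Here the first $m$ coordinates are the elements of $\bS_1$ and the last $m$ are the elements of $\bS_2$; all are independent since $\bS_1,\bS_2\sim\cD^m$ are independent i.i.d.\ samples. The plan is to show each coordinate has bounded difference $c_i=1$. McDiarmid then gives
\[
\Pr[\Phi-\mathbb{E}\Phi\ge\lambda]\le \exp\!\left(-\frac{2\lambda^2}{2m}\right)=\exp(-\lambda^2/m).
\]
Setting $\lambda=\sqrt{m\log(1/\delta)}$ makes the right-hand side exactly $\delta$, which is the claim.

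\textbf{Bounded differences.} Fix $S_1,S_2$, and let $S_1'$ differ from $S_1$ in a single vector (the case of modifying $S_2$ is symmetric). By the low $\ell_\infty$-sensitivity property of Theorem~\ref{thm:stablePartition}, $|n_{S_1}(V)-n_{S_1'}(V)|\le 1$ for every subspace $V$. By the triangle inequality, for every $V$,
\[
|n_{S_1'}(V)-n_{S_2}(V)|\le |n_{S_1}(V)-n_{S_2}(V)|+1\le \Phi(S_1,S_2)+1 .
\]
Taking the supremum over $V$ gives $\Phi(S_1',S_2)\le\Phi(S_1,S_2)+1$. Swapping the roles of $S_1$ and $S_1'$ gives the reverse inequality, so $|\Phi(S_1,S_2)-\Phi(S_1',S_2)|\le 1$.

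\textbf{Measurability and finiteness.} One should also check that $\Phi$ is a well-defined, finite, measurable function, so that McDiarmid applies. The supremum over the possibly infinite family of subspaces is in fact a maximum over at most $2d$ subspaces, namely those with $n_{S_1}(V)>0$ or $n_{S_2}(V)>0$, by the nested subspaces property; all other $V$ contribute $0$. Hence $0\le\Phi\le m$, $\mathbb{E}\Phi$ is finite, and $\Phi$ is a deterministic function of the (finitely valued, for fixed input) output of the stable partition.

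The argument is short. The only step needing care is transferring the $\ell_\infty$-sensitivity of the individual multiplicities to the supremum, and the triangle-inequality step above handles this.
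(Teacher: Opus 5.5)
Your proposal is correct and takes essentially the same route as the paper. The paper also bounds $|\Phi(S_1,S_2)-\Phi(S_1',S_2)|\le \sup_V|n_{S_1}(V)-n_{S_1'}(V)|\le 1$ using the triangle inequality and the $\ell_\infty$-sensitivity of the stable partition, handles the $S_2$ coordinates by symmetry, and then invokes McDiarmid. Your explicit count of the $2m$ coordinates (so $\sum_i c_i^2=2m$, which gives exactly $\delta$ at $\lambda=\sqrt{m\log(1/\delta)}$) and your observation that the supremum is a maximum over at most $2d$ subspaces are details the paper leaves implicit.
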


\begin{proof}
Let $S=(S_1,S_2)$ and $S'=(S_1',S_2)$ be 2 neighboring samples.
 By the triangle inequality:
\begin{align*}
    |\Phi(S) - \Phi(S')|  \\
    &=  \big|\sup_{V}|n_{S_1}(V)-n_{S_2}(V)| - \sup_{V}|n_{S_1'}(V)-n_{S_2}(V)| \big|\\
    &\leq \sup_{V} \big||n_{S_1}(V)-n_{S_2}(V)| - |n_{S_1'}(V)-n_{S_2}(V)| \big| \\
    &\leq \sup_{V} \big| n_{S_1}(V) - n_{S_1'}(V)\big| \\
    &\leq 1
\end{align*}

Then, by McDiarmid, we have 
\[
Pr[\Phi \geq \mathbb{E}\Phi + \sqrt{m \log(1/\delta)}] \leq \delta. 
\]
(Note that because the function $\Phi(S_1, S_2)$ is symmetric with respect to $S_1$ and $S_2$, an identical proof holds if the neighboring samples differ in an element of $S_2$ rather than $S_1$.)
\end{proof}

\begin{claim} \label{clm:expectation_phi}
      For any vector space $\mathbb{F}^d$, we have  
    \[   
\mathbb{E}_{S_1,S_2\sim\mathcal{D}^m}[\Phi(S_1,S_2)]  \leq \sqrt{4m}
    \]  
\end{claim}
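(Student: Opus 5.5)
The plan is to replace the supremum by an $\ell_2$ norm and then control that $\ell_2$ norm through variances, using the Efron--Stein inequality together with the sensitivity guarantees of Theorem~\ref{thm:stablePartition}. Write $X_V := n_{S_1}(V) - n_{S_2}(V)$. By the nested subspaces property, for every pair $(S_1,S_2)$ at most $2d$ of the $X_V$ are nonzero. So pointwise
\[
\Phi(S_1,S_2) = \sup_V |X_V| \le \Big(\sum_V X_V^2\Big)^{1/2},
\]
where the sum is effectively finite. By Jensen's inequality (concavity of the square root),
\[
\mathbb{E}[\Phi] \le \Big(\mathbb{E}\sum_V X_V^2\Big)^{1/2},
\]
so it suffices to show $\mathbb{E}\sum_V X_V^2 \le 4m$. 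I expect the argument to actually give $2m$.

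Since $S_1,S_2$ are i.i.d.\ copies, $n_{S_1}(V)$ and $n_{S_2}(V)$ are independent and identically distributed. Hence $\mathbb{E}[X_V]=0$ and
\[
\mathbb{E}[X_V^2] = \mathrm{Var}(X_V) = 2\,\mathrm{Var}(n_{S}(V)), \qquad S\sim\cD^m .
\]
This is the symmetrization step, and it reduces the problem to bounding $\sum_V \mathrm{Var}(n_S(V))$ for a single sample. Applying the Efron--Stein inequality to each function $f_V(S) = n_S(V)$ and summing over $V$ gives
\[
\sum_V \mathrm{Var}(n_S(V)) \le \frac12 \sum_{i=1}^m \mathbb{E}\Big[\sum_V \big(n_S(V) - n_{S^{(i)}}(V)\big)^2\Big].
\]
Here $S$ and $S^{(i)}$ are neighboring sequences. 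Write $\Delta_V := n_S(V)-n_{S^{(i)}}(V)$. Combining the $\ell_\infty$- and $\ell_1$-sensitivity bounds of Theorem~\ref{thm:stablePartition},
\[
\sum_V \Delta_V^2 \le \big(\max_V |\Delta_V|\big)\sum_V |\Delta_V| \le 1\cdot 2 = 2 .
\]
Therefore $\sum_V \mathrm{Var}(n_S(V)) \le \frac12\cdot m\cdot 2 = m$. It follows that $\mathbb{E}\sum_V X_V^2 \le 2m$, and so $\mathbb{E}[\Phi]\le\sqrt{2m}\le\sqrt{4m}$.

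The main obstacle is measure-theoretic rather than combinatorial. When $\bbF$ is infinite, the index set of subspaces $V$ may be uncountable, so I must justify exchanging $\mathbb{E}$ with $\sum_V$ and summing the Efron--Stein bounds over all $V$. My plan is to work with arbitrary finite subfamilies $\mathcal{V}$ of subspaces. For each such $\mathcal{V}$, the whole chain above holds verbatim, since all terms are nonnegative and the sensitivity bound only improves when restricted to $\mathcal{V}$. This gives $\sum_{V\in\mathcal{V}}\mathbb{E}[X_V^2]\le 2m$, uniformly in $\mathcal{V}$. In particular, only countably many $V$ have $\mathbb{E}[X_V^2]>0$. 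On the complement of a null set, every nonzero $X_V$ then has $V$ in this countable family, and monotone convergence yields $\mathbb{E}\sum_V X_V^2 \le 2m$. A minor point to check along the way is that $\Phi$ is measurable; this holds because it is a maximum over the at most $2d$ subspaces generated by the two partitions.
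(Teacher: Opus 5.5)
\textbf{Gap in the step for uncountable fields.} Your combinatorial core is sound. For a finite or countable field (in particular $\mathbb{F}_2$, the case used for parities) it is a complete proof, and a cleaner one than the paper's, even giving $\sqrt{2m}$. You use the independence of $S_1,S_2$ directly and apply Efron--Stein to a single sample, so a one-coordinate change touches only one partition and the sensitivities $1$ and $2$ apply as stated.

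The gap is in your step for uncountable fields. The assertion ``on the complement of a null set, every nonzero $X_V$ has $V$ in this countable family'' is false. Take $\mathbb{F}=\mathbb{R}$, $d=2$, $\mathcal{D}$ uniform on the unit circle, and $m$ odd. Almost surely the stable partition of each sample consists of $(m-1)/2$ pairs spanning $\mathbb{R}^2$ plus one singleton spanning the random line $L=\mathrm{span}(v_m)$. For every fixed subspace $V$, $n_S(V)$ is a.s.\ constant, so $\mathbb{E}[X_V^2]=0$ and your countable family is empty. Yet $X_{L_1}=1$ and $X_{L_2}=-1$ a.s., so $\sum_V X_V^2=2$ a.s. Thus $\mathbb{E}\sum_V X_V^2=2\neq 0=\sum_V \mathbb{E}[X_V^2]$.

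Tonelli fails here because the counting measure on an uncountable index set is not $\sigma$-finite. More to the point, no argument built only from the variances $\mathrm{Var}(n_S(V))$ of \emph{fixed} subspaces can detect the randomness in \emph{which} subspace appears. Your chain correctly bounds $\sum_V \mathbb{E}[X_V^2]$, but that is not the quantity dominating $(\mathbb{E}\Phi)^2\le \mathbb{E}\sum_V X_V^2$.

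\textbf{How the paper avoids it, and how to repair your proof.} The paper's symmetrization is what handles this. It draws pairs $T=((t_i,t_i'))_{i\le m}$ and swap signs $\sigma$, then conditions on $T$. For fixed $T$, only the finitely many subspaces in $A(T)$ (at most $d2^{m+1}$) can ever occur. Hence $\mathbb{E}_\sigma$ and $\sum_V$ commute trivially, $\mathbb{E}_\sigma[g_V]=0$ by the swap symmetry, and Efron--Stein is applied on the finite coin space.

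The price is that, conditionally on $T$, the two samples are no longer independent. Flipping one coin changes an element of both $S_1$ and $S_2$, so the per-coordinate sensitivity bound becomes $2\cdot 4=8$ instead of your $1\cdot 2$.

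To cover arbitrary fields, adopt this conditioning. Replace your step $\mathbb{E}[X_V^2]=2\,\mathrm{Var}(n_S(V))$ with the conditional mean-zero property of $g_V$. If in Efron--Stein you use a genuine independent copy of $\sigma_i$, which flips the coin only half the time, rather than a forced flip, you still recover the bound $\mathbb{E}\sum_V X_V^2\le 2m$, hence $\mathbb{E}\Phi\le\sqrt{2m}$.
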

\begin{proof}
We begin by bounding the supremum with the $L_2$ norm and applying Jensen's inequality:
   \begin{align*}   \mathbb{E}_{S_1,S_2}[\Phi(S_1,S_2)] \\&=\mathbb{E}_{S_1,S_2}\Big[\sup_{V}|n_{S_1}(V) -n_{S_2}(V)|  \Big] \\
       &\leq  \mathbb{E}_{S_1,S_2}\Big[\sqrt{\sum_V(n_{S_1}(V) -n_{S_2}(V))^2}  \Big] \\
       &\leq \sqrt{\mathbb{E}_{S_1,S_2}\Big[\sum_V(n_{S_1}(V) -n_{S_2}(V))^2  \Big]} \\
   \end{align*}

Next, we introduce a sequence of $m$ paired variables $T = \{(t_1, t_1'), \dots, (t_m, t_m')\}$ drawn i.i.d. from $\mathcal{D}$, and a sequence of independent fair coins $\sigma \in \{-1, 1\}^m$. We construct $S_1$ and $S_2$ by assigning $t_i$ to $S_1$ and $t_i'$ to $S_2$ if $\sigma_i = 1$, and swapping them if $\sigma_i = -1$. Given $T,\sigma$ the sequences $S_1,S_2$ are completely determined. We can rewrite the expectation:
\begin{align*} 
\sqrt{\mathbb{E}_{S_1,S_2}\Big[\sum_V(n_{S_1}(V) -n_{S_2}(V))^2  \Big]} \\
&=\sqrt{\mathbb{E}_{T,\sigma}\Big[\sum_V(n_{S_1}(V) -n_{S_2}(V))^2  \Big]} 
\end{align*}
For a fixed $T$, there are at most $2^m$ possible pairs of sequences, and hence a finite number (at most $d2^{m+1}$)  subspaces for which the $n_S(V)$ is non-zero. we denote these subspaces by $A(T)$.

\begin{align*}
\sqrt{\mathbb{E}_{T,\sigma}\Big[\sum_V(n_{S_1}(V) -n_{S_2}(V))^2  \Big]} \\
&=\sqrt{\mathbb{E}_{T}\Big[\sum_{V\in A(T)}\mathbb{E}_{\sigma}[(n_{S_1}(V) -n_{S_2}(V))^2]  \Big]} \\
\end{align*}

Next, we define $g_V(\sigma) = n_{S_1}(V) - n_{S_2}(V)$. Note that by the symmetric construction, $\mathbb{E}_\sigma[g_V(\sigma)] = 0$ (for any fixing of $T$), meaning $\mathbb{E}_\sigma[g_V(\sigma)^2] = 
\mathbb{E}_\sigma[(g_V(\sigma)-\mathbb{E}g_V(\sigma))^2]=\mathrm{Var}[g_V(\sigma)]$. We have,
\begin{align}\label{eq:E_T_sum_V}
=\sqrt{\mathbb{E}_{T}\Big[\sum_{V\in A(T)}\mathbb{E}_{\sigma}[(n_{S_1}(V) -n_{S_2}(V))^2]  \Big]}
&=\sqrt{\mathbb{E}_{T}\Big[\sum_{V\in A(T)}\mathbb{E}_{\sigma}[(g_V(\sigma))^2]  \Big]} \nonumber\\
&=\sqrt{\mathbb{E}_{T}\Big[\sum_{V\in A(T)}\mathrm{Var}[g_V(\sigma)]  \Big]} 
\end{align}

By Efron-Stein inequality, the variance is bounded by the expected sum of squared differences when flipping a single coin $\sigma_i$ (yielding $\sigma^{(i)}$):
\[ 
\mathrm{Var}[g_V(\sigma)]  \le \frac{1}{2} \sum_{i=1}^m \mathbb{E}_\sigma \left[ (g_V(\sigma) - g_V(\sigma^{(i)}))^2 \right] 
\]

Summing this over all $V \in A(T)$ and applying linearity of expectation:

\begin{align}\label{eq:expect-var-sum}
    \mathbb{E}_T \left[ \sum_{V \in A(T)} Var[g_V(\sigma)] \right]
    &\le  \mathbb{E}_T \left[ \sum_{V \in A(T)} \sum_{i=1 }^m \mathbb{E}_{\sigma}\left[(g_V(\sigma) - g_V(\sigma^{(i)}))^2 \right]\right]\nonumber\\
    &\leq \mathbb{E}_T \left[ \frac{1}{2} \sum_{i=1}^m \mathbb{E}_\sigma \left[ \sum_{V \in A(T)} (g_V(\sigma) - g_V(\sigma^{(i)}))^2 \right] \right]
\end{align}

We can bound the innermost sum of squares using the $L_\infty$ and $L_1$ properties:
\[
 \sum_{V} (g_V(\sigma) - g_V(\sigma^{(i)}))^2 \le \max_{V} |g_V(\sigma) - g_V(\sigma^{(i)})| \cdot \sum_{V} |g_V(\sigma) - g_V(\sigma^{(i)})| 
 \]
Flipping $\sigma_i$ changes exactly one element in $S_1$ and one in $S_2$. By the triangle inequality and $L_\infty$ sensitivity:
\begin{align*}
    \max_{V} |g_V(\sigma) - g_V(\sigma^{(i)})| \\
    &= \max_{V} \Big|\big(n_{S_1}(V)-n_{S_2}(V)\big) -  \big(n_{S_1^{(i)}}(V) -n_{S_2^{(i)}}(V)  \big)   \Big| \\
    &= \max_{V} \Big|\big(n_{S_1}(V)-n_{S_1^{(i)}}(V)\big) + \big(n_{S_2^{(i)}}(V) -n_{S_2}(V)  \big)   \Big|\\
    &\le \max_{V} |n_{S_1}(V) - n_{S_1^{(i)}}(V)| + \max_{V} |n_{S_2}(V) - n_{S_2^{(i)}}(V)|
\end{align*}

Similarly, by the $L_1$ sensitivity constraint:
\begin{align*}
   \sum_{V} |g_V(\sigma) - g_V(\sigma^{(i)})| \\
    &= \sum_{V}  \Big|\big(n_{S_1}(V)-n_{S_2}(V)\big) -  \big(n_{S_1^{(i)}}(V) -n_{S_2^{(i)}}(V)  \big)   \Big| \\
    &= \sum_{V}  \Big|\big(n_{S_1}(V)-n_{S_1^{(i)}}(V)\big) + \big(n_{S_2^{(i)}}(V) -n_{S_2}(V)  \big)   \Big|\\
    &\le \sum_{V}  |n_{S_1}(V) - n_{S_1^{(i)}}(V)| + \sum_{V}  |n_{S_2}(V) - n_{S_2^{(i)}}(V)|
\end{align*}

Applying the $L_{\infty}$  and $L_{1}$ sensitivity properties:

\[
\sum_V |n_{S}(V)-n_{S'}(V)|\leq 2, 
\] and
\[
\max_V |n_{S}(V)-n_{S'}(V)|\leq 1,
\]
We get 
\begin{align*} 
 &\sum_{V} (g_V(\sigma) - g_V(\sigma^{(i)}))^2 \\
 &\le \max_{V} |g_V(\sigma) - g_V(\sigma^{(i)})| \cdot \sum_{V \in A(T)} |g_V(\sigma) - g_V(\sigma^{(i)})| \\
 &\leq 2\cdot4 =  8
\end{align*}

Substituting this bound back into our expectation in Equation~\ref{eq:expect-var-sum} yields:

\begin{align*}
    \mathbb{E}_T \left[ \sum_{V \in A(T)} Var[g_V(\sigma)] \right] \leq  \mathbb{E}_T \left[ \frac{1}{2} \sum_{i=1}^m \mathbb{E}_\sigma [ 8 ] \right]  = \frac{1}{2} \cdot 8m  = 4m,
\end{align*}
and overall, we obtain the following bound (by the equations up to \ref{eq:E_T_sum_V}):

\[
\mathbb{E}_{S_1,S_2\sim\mathcal{D}^m}[\Phi(S_1,S_2)] \leq \sqrt{ \mathbb{E}_T \left[ \sum_{V \in A(T)} Var[g_V(\sigma)] \right] } \leq \sqrt{4m} 
\]

\end{proof}

\section{Proof of Proposition~\ref{prop:span_instability}} \label{sec:adversarial_distribution}

\begin{proof}
We partition the standard basis $\{e_1, \dots, e_d\}$ of $\mathbb{F}_2^d$ into $\sqrt{d}$ disjoint blocks $B_1, \dots, B_{\sqrt{d}}$, each containing exactly $\sqrt{d}$ vectors. We define the distribution $\mathcal{D}$ as follows: For each block index $j \in \{1, \dots, \sqrt{d}\}$ and for each basis vector $e_k \in B_j$, we assign the probability mass
\[
    \mathcal{D}(e_k) = \frac{1}{2^j \sqrt{d}}.
\]
The remaining probability mass is assigned to the zero vector.

Given the sample size $m$, we identify the "unstable block" :  $j^* = \lfloor \log_2(m / \sqrt{d}) \rfloor + 1$. 

For any vector $e_k \in B_{j^*}$, we obtain:
\[
    \frac{1}{m} \le \mathcal{D}(e_k) < \frac{2}{m}.
\]

Let $S_1, S_2 \sim \mathcal{D}^m$ be two independent samples. For any basis vector $e_k \in B_{j^*}$, the probability that it is completely missing from a single sample of size $m$ is $q=(1 - \mathcal{D}(e_k))^m$. We can bound $q$ by absolute constants strictly between $0$ and $1$:
\[
    e^{-3} \le q \le e^{-1} ,
\]
The probability that $e$ is observed in the sample is $p = 1 - q$, which is also bounded strictly by constants away from $0$ and $1$.

Because the support of $\mathcal{D}$ consists solely of standard basis vectors (and the zero vector), a standard basis vector $e_k$ is contained in the linear span of a sample if and only if $e_k$ is actually present in the sample. Namely, $e_k \in \text{span}(S)$ if and only if $e_k \in S$.

The sum of the two spans is generated by the union of their observed basis vectors, and their intersection is generated by the intersection of their observed basis vectors. Thus, we're interested in the number of basis vectors present in exactly one of the two samples:
\[
    \dim(\text{span}(S_1) + \text{span}(S_2)) - \dim(\text{span}(S_1) \cap \text{span}(S_2)) = |S_1 \Delta S_2|.
\]

A basis vector $e_k \in B_{j^*}$ will belong to the symmetric difference $S_1 \Delta S_2$ if $e_k \in S_1$ and  in $e_k \notin S_2$, or vice versa. The probability of this event for a single vector is:
\[
    \Pr[e_k \in S_1 \Delta S_2] = 2pq.
\]

Because both $p$ and $q$ are bounded by constants, $2pq \ge \gamma > 0$ for some absolute constant $\gamma$. 

Since there are $\sqrt{d}$ vectors in the block $B_{j^*}$, we define the random variable $X$ as the total number of basis vectors in $B_{j^*}$ that belong to $S_1 \Delta S_2$. By linearity of expectation,
\[
    \mathbb{E}[X] = \sum_{e_k \in B_{j^*}} 2pq \ge \gamma \sqrt{d}. 
\]

And by Markov 
\[
\Pr\left[X \geq \frac{\gamma}{2}\sqrt{d}\right] \geq \frac{\gamma}{2-\gamma}
\]

Finally, observe that the two spans are identical if and only if $X = 0$.  We can bound this probability as:

\[
 \Pr[X = 0] = (1 - 2pq)^{\sqrt{d}} \le (1 - \gamma)^{\sqrt{d}}
\]

 This is upper bounded by $e^{-\gamma\sqrt{d}}$, which is $\exp(-\Omega(\sqrt{d}))$, as required.
\end{proof}
\section{Empirical Evaluation} \label{sec:empirical}

In this section, we evaluate the performance of our Replicable Parity Learner against naive ERM (Gaussian Elimination) over $\mathbb{F}_2^d$. 
All experiments  were tested on a MacBook Air with an Apple M4 chip and 16GB RAM.

While Gaussian Elimination efficiently solves parity learning in the standard PAC model, Proposition \ref{prop:span_instability} suggest that it becomes highly unstable under biased distributions, leading to a failure in replicability. Our empirical evaluation aims to visualize this instability, and to demonstrate that our algorithm efficiently guarantees replicability, clearly illustrating the trade-off with generalization accuracy.

\subsection{Experimental Setup}
We fix the dimension at $d = 20$ and the replicability parameter at $\rho = 0.05$, and evaluate performance across varying sample sizes $m$ ranging from $50$ up to $1000$. To  measure replicability, we evaluate the algorithms over 100 independent experiments. Each experiment consists of exactly two independent trials.

 For each trial, we draw a fresh, independent training set. However, within a given experiment we sample the Replicable Learner's internal randomness (the threshold $t$) once and freeze it across both runs. Replicability is measured by checking whether the two independent trials produce the exact same hypothesis. We report the empirical  probability (the average success rate of these pairwise matches) across the 100 experiments.

To ensure a fair comparison, both algorithms  output the lexicographically smallest parity vector consistent with their learned subspaces. Generalization accuracy is evaluated on a static holdout set of 5000 samples. For our data distribution, we utilize a simplified version of the  distribution established in Proposition \ref{prop:span_instability}. Specifically, the probability mass for the standard basis vectors decays exponentially, such that $\mathcal{D}(e_i) = 2^{-i}$ for all $1 \leq i \leq d$ (the rest of the probability mass is assigned to the $0$ vector).

\subsection{Results}
The results of our empirical evaluation, illustrated in Figure \ref{fig:replicability_plot}. The right panel highlights the instability of Gaussian Elimination under the biased distribution. Because Gaussian Elimination fits the rare tail vectors present in any given sample, its learned subspace fluctuates wildly. Consequently, even as the sample size increases, independent executions of Gaussian Elimination rarely yield the exact same hypothesis, resulting in an empirical replicability score  between $20\%$ and $30\%$. In contrast, our Replicable Learner successfully forces independent runs to converge on the same hypothesis space, achieving an empirical replicability score of approximately $90\%$ at larger sample sizes. The left panel clearly illustrates the trade-off with accuracy. While Gaussian Elimination consistently reaches nearly $100\%$ test accuracy, the replicable learner algorithm achieves approximately $88\%$ at $m=1000$.

Finally, to improve efficiency in our experiments, we implement practical optimizations compared to the  theoretical requirements of Theorem \ref{thm:parity_learner2}. First, we use smaller sample sizes (and a proportionally scaled threshold). Second, because our synthetic data generation guarantees realizable labels, we omit the realizability check, as it evaluates to false with probability 1. While our results suggest that the algorithm performs well in practice under these relaxed conditions, the $O(d^6)$ dependency in sample complexity remains a limitation for deploying our methods in extremely high-dimensional regimes. 

\begin{figure}[htbp]
    \centering
    \includegraphics[width=\linewidth]{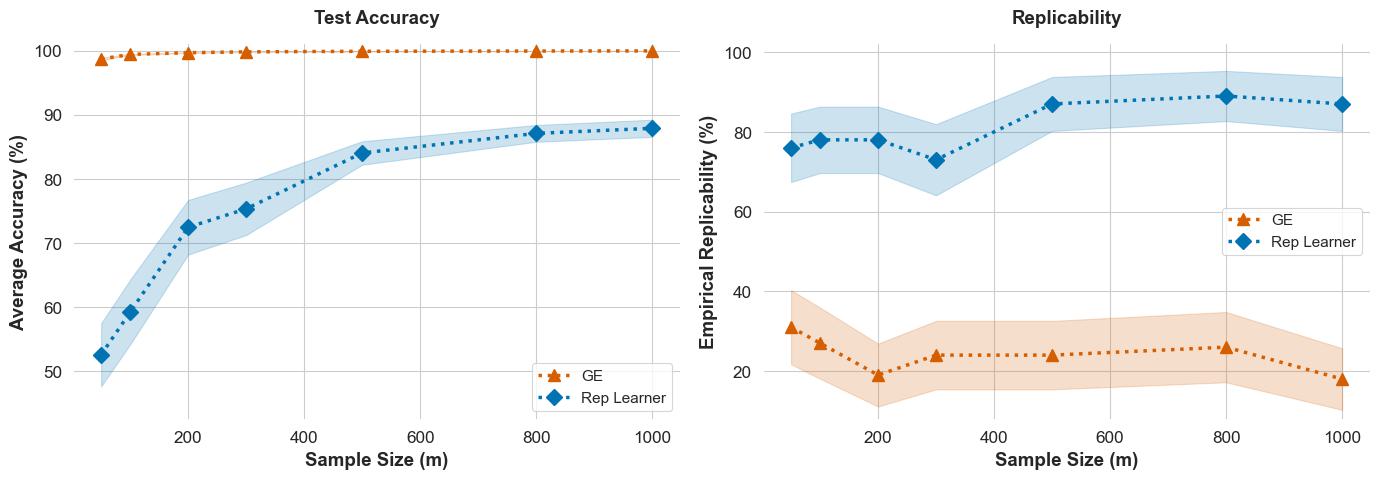}
    \caption{Accuracy and Replicability  comparing Gaussian Elimination and the Replicable Parity Learner. Shaded regions represent $\pm 2$  standard error of the mean (SEM) across independent experiments.}
    \label{fig:replicability_plot}
\end{figure}

\section{Stable Partition} \label{sec:stable_partition}

The subroutine \texttt{Stable Partition} previously appeared as Algorithm 1 in \cite{kaplan2025differentially}. We include it here for completeness.  \texttt{Stable Partition}  gets a sequence of vectors as
input and outputs a partition of the input sequence into multiple sets, each of which is linearly
independent. The key properties of the procedure 
are summarized in Theorem \ref{thm:stablePartition}
.

\begin{algorithm}[H]
\caption{Stable Partition to Linearly Independent Sets}
\label{alg:stable-partition}
\begin{algorithmic}[1]
\REQUIRE A sequence of vectors $\{v_i\}_{i=1}^m \subseteq \mathbb{F}^d$, where $v_i \neq 0$ for all $i$
\ENSURE A partition of $\{v_i\}$ into sets $\{A_j\}$, each of which is linearly independent

\STATE Initialize an empty list $\mathcal{P}$ to store the independent sets
\STATE Initialize $i = 1$
\WHILE{the input sequence is not empty}
    \STATE Initialize an empty set $A_i$
    \FOR{each vector $v_j$ in the input sequence} 
        \IF{$A_i \cup \{v_j\}$ is linearly independent}
            \STATE Add $v_j$ to $A_i$
        \ENDIF
    \ENDFOR
    \STATE Add $A_i$ to the list of independent sets $\mathcal{P}$
    \STATE Remove the vectors in $A_i$ from the input sequence
    \STATE $i \gets i + 1$
\ENDWHILE
\RETURN $\mathcal{P}$
\end{algorithmic}
\end{algorithm}

\end{document}